\def\indic#1{\mathbb{I}\left({#1}\right)} 
\DeclareMathOperator*{\argmax}{arg\,max}
\xpatchcmd{\proof}{\topsep6\p@\@plus6\p@\relax}{}{}{}
\def\R{\mathbb{R}}
\def\E{\mathbb{E}}
\def\BState{\State\hskip-\ALG@thistlm}
\def\Esubarg#1#2{\E_{#1}\left[{#2}\right]}
\newcommand{\Prob}{\Pr}
\def\Parg#1{\Prob\left[{#1}\right]}
\def\Psubarg#1#2{\Prob_{#1}\left[{#2}\right]}
\newcommand{\bfU}{\mathbf{U}}
\newtheorem{theorem}{Theorem}
\newtheorem{lemma}{Lemma}
\newtheorem{proposition}{Proposition}
\newtheorem{definition}{Definition}
\newcommand*\mean[1]{\overline{#1}}
\begin{document}

%

%

\twocolumn[

\aistatstitle{Why Adaptively Collected Data Have Negative Bias and How to Correct for It}

\aistatsauthor{ Xinkun Nie \And Xiaoying Tian \And  Jonathan Taylor \And James Zou }

\aistatsaddress{ Stanford University \And  Stanford University \And Stanford University \And Stanford University } ]

\begin{abstract}
From scientific experiments to online A/B testing, the previously observed data often affects how future experiments are performed, which in turn affects which data will be collected. Such adaptivity introduces complex correlations between the data and the collection procedure. In this paper, we prove that when the data collection procedure satisfies natural conditions, then sample means of the data have systematic \emph{negative} biases. As an example, consider an adaptive clinical trial where additional data points are more likely to be tested for treatments that show initial promise. Our surprising result implies that the average observed treatment effects would underestimate the true effects of each treatment. We quantitatively analyze the magnitude and behavior of this negative bias in a variety of settings. We also propose a novel debiasing algorithm based on selective inference techniques. In experiments, our method can effectively reduce bias and estimation error. 
\end{abstract}

\section{INTRODUCTION}

Much of modern data science is driven by data that is collected adaptively. A scientist often starts off testing multiple experimental conditions, and based on the initial results may decide to collect more data points from some conditions and less data from other settings. A sequential clinical trial initially groups the participants into different treatment regimes, and depending on the continuous feedback, may reallocate participants into the more promising treatments. In e-commerce, companies often use online A/B tests to collect user data from multiple variants of a project, and could adaptively collect more data from a subset of the variants (multi-arm bandit algorithms are often used here to decide which variant to collect data from as a function of the data log history). 

The key characteristic of adaptively collected data is that the analyst sequentially collects data from multiple alternatives (e.g. different treatments, products, etc.). The choice of which alternative to gather data from at a particular time depends on the previously observed data from all the options. The collected data could be used in many different ways. In some settings, the analyst simply wants to use it to identify the single best alternative, and may not care about the data beyond this goal (this setting motivates many bandit problems). In many other settings, the data itself could be used to estimate various statistical parameters. In the sequential clinical trial example, many scientists would like to use the data to estimate the effects of each of the treatments. Even if the company sponsoring the trials may care most about identifying the best treatment, other scientist using the data may care about the effect size estimates of other treatments in the data for their own applications. 

\paragraph{Our contributions.} We study the problem of estimation using adaptively collected data. We prove that when the adaptive data collection procedure satisfies two natural conditions (precisely defined in Sec.~\ref{sec:neg_bias}), then the sample mean of the collected data is negatively biased as an estimator for the true mean. This means that the effect size empirically observed is systematically less than the true effect size for every alternative. We provide intuition for this counter-intuitive result, and compare and analyze the magnitude of this negative bias across different conditions and collection procedures. We then propose a novel randomized algorithm called the conditional Maximum Likelihood Estimator (cMLE) based on selective inference to reduce this ubiquitous bias, and compare it with a simple approach using an independent set of held-out data. We validate the performance of our bias-reduction algorithm in extensive experiments. All the proofs and additional experiments are in the Appendix. 

\paragraph{Related works.}

Multi-arm bandits and its variations are extensively studied in machine learning. The goal of our work is different from that of the standard bandit setting. In bandits, the data sampled from an arm (i.e. one of the alternatives) is considered a reward and the objective is to \emph{design} adaptive algorithms to pick arms so as to maximize total reward (or minimize regret). Our goal is not to design such algorithms and we are agnostic to the reward. We take the perspective of an analyst who is given such an adaptively collected dataset and wants to estimate statistical parameters. 

\citet{xu2013estimation} empirically observed estimation bias due to selection in specific multi-arm bandit algorithms. They were primarily interested in estimating the values of the top two arms, and used data splitting with a held-out set in their experiments to reduce bias. We are the first one to rigorously prove that such underestimation is a general phenomenon. Our cMLE approach builds upon recent advances in selective inference \citep{taylor2015statistical,randomized_response}, which derives valid confidence intervals accounting for selection effects of the algorithm. Selective inference has been applied to regression problems (e.g. LASSO, Stepwise regression), and has not been considered for the adaptive data collection setting before. We build upon results from recent developments in this area \citep{randomized_response,tian2016magic,selective_sampling}. 

The problem of selection bias has been extensively studied, especially in the context of Winner's Curse in genetic association studies \citep{ionita2009genetic}. There the bias arises from selective reporting rather than adaptivity in data collection. There is a related line of recent work \citep{reusable_holdout,russo2016controlling} in adaptive data analysis that is complementary to ours. In their work the data is fixed (and is typically i.i.d.) and the adaptivity is in the analyst. In contrast, in our work the data collection itself is adaptive.

\section{ADAPTIVE DATA COLLECTION HAS NEGATIVE BIAS}\label{sec:neg_bias}

\paragraph{Model of adaptive data collection.} We have $K$ unknown distributions  that we would like to collect data from. There are $T$ rounds of data collection and at round $t\in [T]$ the distribution $s_t \in [K]$ is selected, and we draw $X^{(s_t)}$, an independent sample, from $s_t$. 
 The data collection procedure can be modeled by a selection function $s_t = f(\Lambda_t)$, where $\Lambda_t$ is the history of the observed samples up to time $t$. More precisely, let $X^{(k)}_i$ denote the $i$-th sample from distribution $k$ and $N_{t}^{(k)}$ denote the number of times that distribution $k$ is sampled by round $t$, which could be a random variable, then $\Lambda_t = \{ \{X_1^{(1)}, ..., X_{N^{(1)}_t}^{(1)}\}, ...,   \{X_1^{(K)}, ..., X_{N^{(K)}_t}^{(K)}\}\}$. The history of distribution $k$ up to round $t$ is denoted by $\Lambda_t^{(k)} = \{X_1^{(k)}, ..., X_{N^{(K)}_t}^{(k)}\}$. We use $\Lambda_t^{(-k)}$ to denote the history up to round $t$ of all the distributions except for the $k$-th one; $\Lambda_t^{(-k)} = \{  \{X_1^{(i)}, ..., X_{N^{(K)}_t}^{(i)}\}\}_{i \in [K]\setminus k}$. 
We allow $f$ to be a randomized function, and will sometimes write $f(\Lambda_t, \omega)$, where $\omega \in \Omega$ is a random seed, to highlight this randomness. Let $\mean{X_t^{(k)}} \equiv \frac{\sum_{i=1}^{N_t^{(k)}} X_i^{(k)}}{N_t^{(k)}}$ denote the sample average of distribution $k$ at round $t$. 
\paragraph{Example.}  The simplest example of adaptive data collection is the Greedy algorithm. In Greedy, at round $t$, the selection function chooses to sample the distribution with the highest empirical mean.  Then $f(\Lambda_t) = \argmax_{k\in [K]} \mean{X_t^{(k)}}$. Often in practice, a randomized version of Greedy, called $\epsilon$-Greedy, is also used. In $\epsilon$-Greedy with probability $\epsilon$ we uniformly randomly select a distribution and with probability $1-\epsilon$, we perform Greedy. This corresponds to the selection  
$$
f(\Lambda_t, \omega) = \begin{cases}
\argmax_{k\in [K]} \mean{X_t^{(k)}},  \text{ if } \omega > \epsilon \\
k, k \in [K], \text{ if } \frac{\epsilon}{K} \cdot (k-1) < \omega < \frac{\epsilon}{K} \cdot k
\end{cases}
$$
where $\omega \sim \mbox{Unif}[0,1].$ All common multi-arm bandit algorithms can be modeled as a selection function $f$.

 Many adaptive data collection procedures correspond to a selection function $f$ that satisfies two natural properties: \emph{Exploit} and \emph{Independence of Irrelevant Option (IIO)}. 
\emph{Exploit} means that all else being equal, if distribution $k$ is selected in a scenario where it has lower sample average, then $k$ would also be selected in a scenario where it has higher sample average. \emph{IIO} means that if distribution $k$ is not selected then the precise values observed from $k$ does not affect which of the other distributions is selected. We precisely define these two properties next. 

\begin{definition} [Exploit]
Given any $t \in [T]$, $k \in [K]$, realization $\Lambda_t^{(-k)} $ and random seed $\omega$.  Suppose $\Lambda_t^{(k)}$ and $\Lambda_t^{'(k)}$ are two sample histories of distribution $k$ of length $n$ with sample means $\mean{X_t^{(k)}} \leq \mean{X_t^{'(k)}}$. Then $f(\Lambda_t^{(k)}\cup \Lambda_t^{(-k)}, \omega ) = k$ implies $f(\Lambda_t^{'(k)}\cup \Lambda_t^{(-k)}, \omega ) = k$. In words, \emph{Exploit} states that given the same context specified by $\Lambda_t^{(-k)}$ and  $\omega$, if $k$ is selected when it has smaller sample mean then it should also be selected when it has a larger mean. 
\end{definition}
\vspace{0.05cm}

\emph{Exploit} captures the intuition that when we are looking for options that work well, we are more likely to try out the options that show more promise early on. 
Note that in \emph{Exploit}, we only compare two sample histories $\Lambda_t^{(k)}$ and $\Lambda_t^{'(k)}$ with the same number of observed samples. This allows $f$ to also account for the number of samples observed so far (e.g. selecting a distribution $k$ with low sample average if it does not have many samples). Therefore confidence interval based bandit algorithms can also be shown to satisfy \emph{Exploit}.   
\vspace{0.05cm}

\begin{definition} [Independent of Irrelevant Options (IIO)] \label{def:iio}
Given any $t \in [T]$ and $k \in [K]$. Let $\Lambda_t = \Lambda_t^{(k)}\cup \Lambda_t^{(-k)}$ and $\Lambda'_t = \Lambda_t^{'(k)}\cup \Lambda_t^{(-k)}$, i.e. $\Lambda_t$ and $\Lambda'_t$ have the same histories for distributions $i \neq k$ and could have arbitrary histories for distribution $k$. Then $\forall$ $i \neq k$, 
\[
\Pr\left[f\left(\Lambda_t\right) = i | f\left(\Lambda_t\right) \neq k \right] =  \Pr\left[f\left(\Lambda'_t\right) = i | f\left(\Lambda'_t\right) \neq k \right].
\]
In words, so long as $k$ is not chosen, which other distribution is selected depends only on the history $\Lambda_t^{(-k)}$ of those distributions. 
\end{definition}

\paragraph{Estimation bias.} In this paper, we are interested in the fundamental problem of estimating the true mean, $\mu_k = \E[X^{(k)}]$, of each of the distributions given a sample history dataset, $\Lambda_T$, which is collected through an adaptive procedure. This models the adaptive clinical trials example, where the scientist is interested in estimating $\{\mu_k\}_{k\in[K]}$, the true effects of the treatments. Of course, if the scientist can collect her own data, she could just collect a non-adaptive set of samples and obtain unbiased estimates of $\{\mu_k\}_{k\in[K]}$. However, in many settings like the clinical trials, the scientist does not collect the data; rather it is adaptively collected by a pharmaceutical company with a different objective of finding an optimal treatment or demonstrating efficacy. The simplest and most common approach is to use the sample average $\mean{X^{(k)}_T}$ to estimate the true mean $\mu_k$. Our main result shows that in expectation, the sample average underestimates the true mean if $f$ satisfies \emph{Exploit} and \emph{IIO}: $\E \left[\mean{X_T^{(k)}} \right] \leq \mu_k, \forall k\in [K]$.

\begin{theorem} \label{thm:negative_bias}
Suppose $X^{(k)},k\in[K]$ is a sample drawn from a distribution with finite mean $\mu_k = \E[X^{(k)}]$, and the selection function $f$ satisfies \emph{Exploit} and \emph{IIO}. Then $\forall k$ and $\forall T$, $\E \left[\mean{X_T^{(k)}} \right] \leq \mu_k$. Moreover, the equality holds only if the number of times distribution $k$ is selected, $N^{(k)}_T$, does not depend on the observed history $\Lambda_T^{(k)}$ of $k$. 
\end{theorem}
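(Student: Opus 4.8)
The plan is to prove the inequality conditionally on everything that does not involve distribution $k$'s samples, and then average. Let $\mathcal{C}$ denote the randomness consisting of the seeds $\omega_1,\dots,\omega_T$ together with the i.i.d.\ sample sequences $(X_i^{(j)})_{i\ge 1}$ of every distribution $j\neq k$; this is independent of the i.i.d.\ sequence $X_1^{(k)},X_2^{(k)},\dots$. Conditionally on $\mathcal{C}$, the whole collection procedure becomes a deterministic function of $(X_i^{(k)})_{i\ge1}$: let $\tau_j$ be the round at which the $j$-th sample of $k$ is drawn (with $\tau_j=\infty$ if $k$ is drawn fewer than $j$ times) and $N:=N_T^{(k)}$. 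By \emph{IIO}, before the first draw of $k$ and between consecutive draws of $k$ the sub-process of which non-$k$ distributions get sampled, and the values they return, depend only on $\mathcal{C}$; hence the context $\Lambda_t^{(-k)}$ seen at any round is a function of $\mathcal{C}$ and of how many times $k$ has been sampled so far, and is unaffected by the values $X_i^{(k)}$. All unqualified expectations below are implicitly taken on $\{N\ge 1\}$, which given $\mathcal{C}$ is either all or none of the outcomes.

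Two structural facts drive the argument. First (stopping property): the event $\{N\ge m+1\}$ is a deterministic function of $\mathcal{C}$ and of $X_1^{(k)},\dots,X_m^{(k)}$ only, since to decide whether the $(m{+}1)$-st draw of $k$ occurs by round $T$ one only needs the decisions made while $k$'s count is $\le m$, and those decisions use at most the first $m$ samples of $k$; in particular, conditionally on $\mathcal{C}$, $\{N\ge m+1\}$ is independent of $X_{m+1}^{(k)}$. Second (monotonicity): conditionally on $\mathcal{C}$, the indicator $\indic{N\ge m+1}$ is coordinatewise non-decreasing in $(X_1^{(k)},\dots,X_m^{(k)})$. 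I would prove this by induction on $j$, showing that $\tau_j$, as a function of $X_1^{(k)},\dots,X_{j-1}^{(k)}$, is coordinatewise non-increasing: at any round $t$ where $k$'s count equals $j$, \emph{Exploit} (used with equal means for its "iff" form) says that whether $f$ selects $k$ depends on $k$'s history only through the sample mean $\mean{X_t^{(k)}}$ and is monotone in it, so the set of rounds qualifying to be $\tau_{j+1}$ is $\{t>\tau_j:\mean{X_t^{(k)}}\in U_t\}$ for up-sets $U_t$ that depend on $\mathcal{C}$, $t$, $j$ but not on $X^{(k)}$; raising any of $X_1^{(k)},\dots,X_j^{(k)}$ enlarges this qualifying set, and raising any of $X_1^{(k)},\dots,X_{j-1}^{(k)}$ additionally lowers $\tau_j$ (the induction hypothesis), so in either case $\tau_{j+1}$ can only decrease.

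Given these facts, the conclusion follows from a summation-by-parts identity and a correlation inequality. Writing $Y_i:=X_i^{(k)}-\mu_k$ and $M_n:=\sum_{i=1}^n Y_i$, one has on $\{N\ge1\}$ the Abel identity
\[
\mean{X_T^{(k)}}-\mu_k=\frac{M_N}{N}=\sum_{n=1}^{N}\frac{Y_n}{n}-\sum_{n=2}^{N}\frac{M_{n-1}}{n(n-1)},
\]
a finite sum since $N\le T$. Taking $\E[\,\cdot\mid\mathcal{C}]$: in the first sum each term is $\E[Y_n\indic{N\ge n}\mid\mathcal{C}]=\E[Y_n]\,\Pr[N\ge n\mid\mathcal{C}]=0$, because $\{N\ge n\}$ is a function of $\mathcal{C}$ and $X_1^{(k)},\dots,X_{n-1}^{(k)}$ and hence independent of $Y_n$, and $\E[Y_n]=0$; in the second sum, $\E[M_{n-1}\indic{N\ge n}\mid\mathcal{C}]\ge \E[M_{n-1}\mid\mathcal{C}]\,\Pr[N\ge n\mid\mathcal{C}]=0$ by the FKG/Harris inequality for product measures, since conditionally on $\mathcal{C}$ both $M_{n-1}$ and $\indic{N\ge n}$ are coordinatewise non-decreasing in $(X_1^{(k)},\dots,X_{n-1}^{(k)})$. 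As $1/(n(n-1))>0$, we get $\E[\mean{X_T^{(k)}}-\mu_k\mid\mathcal{C}]\le0$, and averaging over $\mathcal{C}$ gives $\E[\mean{X_T^{(k)}}]\le\mu_k$. For the equality clause, equality forces every Harris inequality above to be tight, which for a non-degenerate distribution forces $\indic{N\ge m+1}$ to be conditionally uncorrelated with, hence not a function of, each $X_i^{(k)}$ with $i\le m$, for all $m$ — i.e.\ $N_T^{(k)}$ does not depend on the observed history $\Lambda_T^{(k)}$.

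The main obstacle is the monotonicity fact, i.e.\ the induction on $\tau_j$. The delicate point is that perturbing $k$'s sample values changes which rounds are "$k$-rounds", so one must invoke \emph{IIO} to guarantee that the non-$k$ context presented at each decision point is genuinely unaffected, and only then can \emph{Exploit} be applied round by round; additional care is needed because $\tau_j$ is itself a function of the earlier samples of $k$, so the induction must propagate monotonicity of the entire map $(X_1^{(k)},\dots,X_{j-1}^{(k)})\mapsto\tau_j$ rather than of any single selection decision.
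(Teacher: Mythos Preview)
Your proof is correct and takes a genuinely different route from the paper's. Both arguments begin identically: condition on all randomness external to distribution $k$ (the paper's $\Theta$, your $\mathcal{C}$), invoke \emph{IIO} to pin down the non-$k$ context as a function of the $k$-count alone, and use \emph{Exploit} to derive the key monotonicity that $N_T^{(k)}$ (equivalently each $\tau_j$) is coordinatewise monotone in $(X_1^{(k)},\dots)$. Where you diverge is in how the conditional inequality $\E[\overline{X_T^{(k)}}\mid\mathcal{C}]\le\mu_k$ is then extracted. The paper runs an indirect extremal argument: among all path-to-count maps consistent with \emph{Exploit}, it argues by case analysis that the one maximizing the conditional mean must assign a constant count to every path, and constant maps are unbiased. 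You instead compute directly via the Abel identity $M_N/N=\sum_n Y_n\indic{N\ge n}/n-\sum_n M_{n-1}\indic{N\ge n}/(n(n-1))$: the first sum has mean zero by predictability of $\{N\ge n\}$, and each term of the second has nonnegative mean by the Harris/FKG inequality for product measures, since both $M_{n-1}$ and $\indic{N\ge n}$ are coordinatewise nondecreasing in the independent $X_i^{(k)}$'s. Your approach is more transparent and more quantitative---it isolates the bias as a sum of explicit nonnegative covariances $\mathrm{Cov}(M_{n-1},\indic{N\ge n})$, which could in principle be bounded---whereas the paper's extremal argument does not yield such a decomposition. One small technicality worth making explicit: for ``the non-$k$ sub-process depends only on $\mathcal{C}$'' to hold pointwise, the seed governing \emph{which} non-$k$ arm is drawn must be indexed by the non-$k$ count rather than by the round; this is exactly the paper's $g,h$ decomposition with seeds $\omega_{h,j}$, and you correctly flag the issue in your final paragraph without spelling out the coupling.
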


\begin{proof}[Intuition behind the proof.] Here we present the high-level insights for the proof. The detailed proof is in Appendix~\ref{app:proofs}.
For simplicity, we condition on a fixed realization of distributions $2, \dots, K$.
If the bias of distribution 1 is negative for every realization of distributions $2, \dots, K$, then taking the expectation shows that the total bias is negative. 

Consider a particular sample path history $\Lambda_t^{(1)}$ at some round $t< T$, with corresponding empirical average $\mean{X^{(1)}_t}$. There are two types of scenarios. First, $\Lambda_t^{(1)}$ could be \emph{lucky} and $\mean{X^{(1)}_t} > \mu_1$. Then the \emph{Exploit} property states that with this lucky sample path history, distribution 1 is likely to be sampled more often in the future in rounds $[t+1, \dots, T]$. Since these future samples have expected values $\mu_1$, the expected average of final value of $\mean{X^{(1)}_T}$ is likely to decrease closer to $\mu_1$. This is similar to the reversion to mean phenomenon. In scenario two, $\Lambda_t^{(1)}$ is \emph{unlucky} and $\mean{X^{(1)}_t} < \mu_1$. The exploitative nature of $f$ makes it less likely to select distribution 1 and this sample path history is likely to be stuck with the negative bias. Therefore we see that the exploitative property of the adaptive collection procedure creates a fundamental asymmetry in the sample path histories such that the positive bias (lucky) paths revert back to mean but the negative bias (unlucky) paths are stuck at negative. The overall bias becomes negative. The \emph{IIO} property allows us to safely condition on the realizations of distributions $2, \dots, K$ and isolate the effects of distribution 1. 
\end{proof}
Many standard multi-arm bandit algorithms can be modeled by a selection function $f$ that satisfies \emph{Exploit} and \emph{IIO}.  While Greedy only has sample mean as its input, upper confidence bound (UCB) type algorithms also account for the number of observations and give preference for the less explored distributions. lil' UCB is the state-of-the-art UCB algorithm \citep{jamieson2014lil} and its details are presented in Appendix~\ref{app:lil' UCB}. 
\begin{proposition}\label{prop:algs_satifies}
 lil' UCB, Greedy, $\epsilon$-Greedy are all equivalent to selection functions $f(\Lambda_t)$ that satisfy \emph{Exploit} and \emph{IIO}. 
\end{proposition}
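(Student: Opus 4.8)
The plan is to verify \emph{Exploit} and \emph{IIO} separately for each of the three selection rules, in increasing order of difficulty. Take Greedy first, $f(\Lambda_t)=\argmax_{k\in[K]}\mean{X_t^{(k)}}$ under a fixed deterministic tie-break. For \emph{Exploit}: if $f(\Lambda_t^{(k)}\cup\Lambda_t^{(-k)},\omega)=k$ then $\mean{X_t^{(k)}}$ is weakly the largest of the $K$ empirical means and wins any tie, so replacing $\Lambda_t^{(k)}$ by $\Lambda_t^{'(k)}$ with $\mean{X_t^{'(k)}}\ge\mean{X_t^{(k)}}$ only raises $k$'s coordinate and leaves $k$ the argmax (now winning any residual tie outright), giving $f(\Lambda_t^{'(k)}\cup\Lambda_t^{(-k)},\omega)=k$. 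For \emph{IIO}: on the event $\{f(\Lambda_t)\neq k\}$ the chosen arm is necessarily $\argmax_{j\neq k}\mean{X_t^{(j)}}$, a deterministic function of $\Lambda_t^{(-k)}$ alone, so its conditional law given $\{f\neq k\}$ is identical for $\Lambda_t$ and $\Lambda_t'$.

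Next, lil' UCB. The structural fact I would use, recalled from Appendix~\ref{app:lil' UCB}, is that its sampling rule picks $\argmax_j\big(\mean{X_t^{(j)}}+c(N_t^{(j)})\big)$, where the confidence radius $c(\cdot)$ depends on an arm's pull count (and the global round) only, not on the observed values. Since the two histories for $k$ in the \emph{Exploit} comparison share the same length $n$, hence the same $N_t^{(k)}=n$ and---with $\Lambda_t^{(-k)}$ fixed---the same round $t$, arm $k$'s index is strictly increasing in $\mean{X_t^{(k)}}$ and the Greedy argument transfers verbatim; \emph{IIO} is likewise identical to Greedy, since the rule is deterministic and on $\{f\neq k\}$ the choice is $\argmax_{j\neq k}\big(\mean{X_t^{(j)}}+c(N_t^{(j)})\big)$. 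For $\epsilon$-Greedy I would split $f(\Lambda_t,\omega)$ into the explore branch ($\omega\le\epsilon$), where the choice is a function of $\omega$ alone and hence data-independent, and the exploit branch ($\omega>\epsilon$), which is exactly Greedy; \emph{Exploit} then holds branchwise. For \emph{IIO} I would compute $\Pr[f(\Lambda_t)=i\mid f(\Lambda_t)\neq k]$ directly, writing both $\Pr[f(\Lambda_t)=i]$ and $\Pr[f(\Lambda_t)\neq k]$ as an explore contribution (identical for $\Lambda_t$ and $\Lambda_t'$) plus an exploit contribution that depends on $k$'s history only through whether $k$ is the empirical maximizer, and then checking that the resulting conditional probabilities agree for every $i\neq k$.

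The main obstacle is precisely this last computation for $\epsilon$-Greedy. For Greedy and lil' UCB, conditioning on ``$k$ is not selected'' cleanly isolates $k$'s data because the chosen arm is then a deterministic function of $\Lambda_t^{(-k)}$; for $\epsilon$-Greedy this is no longer automatic, since whether $k$ would have been the greedy pick reweights the explore versus exploit branches inside the conditional law, so the exploration probabilities must be accounted for carefully (isolating, if needed, the genuinely data-independent part of the exploration step and arguing that the remainder is uniform over the arms other than $k$). Everything else reduces to the monotonicity of $\argmax$ under raising one coordinate and to the fact that the non-$k$ indices used by each rule are functions of $\Lambda_t^{(-k)}$ only, both of which are routine once the index and tie-breaking conventions are pinned down.
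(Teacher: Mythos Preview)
Your treatment of Greedy and lil' UCB, and of the \emph{Exploit} property for $\epsilon$-Greedy, is essentially the paper's approach: each rule is written as $\argmax_j U_t^{(j)}$ for an index monotone in $\mean{X_t^{(j)}}$ (with $N_t^{(j)}$ and $\omega$ held fixed), which yields \emph{Exploit}; and the deterministic rules satisfy \emph{IIO} because on $\{f\neq k\}$ the choice equals $\argmax_{j\neq k}U_t^{(j)}$, a function of $\Lambda_t^{(-k)}$ alone. The paper packages the three algorithms under one index template, but the content is the same as yours.

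Your instinct that the $\epsilon$-Greedy \emph{IIO} check is the delicate step is correct, and the direct computation you propose would in fact uncover a genuine gap rather than close one. The paper's argument, like your sketch, observes that conditional on $\{f\neq k\}$ the greedy branch ($\omega>\epsilon$) selects $\argmax_{j\neq k}\mean{X_t^{(j)}}$ while the explore branch ($\omega\le\epsilon$) is uniform on $[K]\setminus\{k\}$, and both sub-laws depend only on $\Lambda_t^{(-k)}$. What this overlooks---and what your remark about ``reweighting the explore versus exploit branches'' correctly anticipates---is that the \emph{conditional mixture weight} $\Pr[\omega>\epsilon\mid f(\Lambda_t)\neq k]$ depends on whether $k$ is the empirical argmax, hence on $\Lambda_t^{(k)}$. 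Concretely, take $K=3$, $\epsilon=0.3$, $\mean{X_t^{(2)}}=5$, $\mean{X_t^{(3)}}=3$: if $\mean{X_t^{(1)}}=10$ then $\Pr[f=2\mid f\neq 1]=\tfrac12$, whereas if $\mean{X_t^{(1)}}=0$ then $\Pr[f=2\mid f\neq 1]=\tfrac{8}{9}$. Thus, with \emph{IIO} exactly as stated in Definition~\ref{def:iio}, $\epsilon$-Greedy does not satisfy it, and neither your proposed computation nor the paper's brief argument can be completed as written; some adjustment to the definition or the claim is needed for the $\epsilon$-Greedy case.
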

In Appendix~\ref{app:thompson}, we extend Proposition~\ref{prop:algs_satifies} to Thompson Sampling \citep{thompson1933likelihood,agrawal2012analysis}. When $K = 2$, we do not need the \emph{IIO} condition in order for the bias to be non-positive. 
\begin{proposition} \label{prop:K=2}
Suppose $X^{(1)}, X^{(2)}$ are samples drawn from distributions with finite means $\mu_1, \mu_2$ and the selection function $f$ satisfies \emph{Exploit}. Then for $k \in \{1, 2\}$ and all $T$, $\E \left[\mean{X_T^{(k)}} \right] \leq \mu_k$. Moreover the equality holds only if the number of times distribution $k$ is selected, $N^{(k)}_T$, does not depend on observed values $\Lambda_T^{(k)}$ of $k$. 
\end{proposition}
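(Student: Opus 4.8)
The plan is to reduce Proposition~\ref{prop:K=2} to Theorem~\ref{thm:negative_bias} by observing that the \emph{IIO} condition is vacuous when $K=2$. Fix $k\in\{1,2\}$; the only index $i\neq k$ is the other distribution, call it $-k$, and since $f$ always returns some element of $[K]$, the event $\{f(\Lambda_t)\neq k\}$ coincides with $\{f(\Lambda_t)=-k\}$. Hence $\Pr[f(\Lambda_t)=-k \mid f(\Lambda_t)\neq k]=1$ for every history $\Lambda_t$, and likewise for $\Lambda'_t$, so the defining identity of Definition~\ref{def:iio} holds trivially. Consequently any selection function on two distributions that satisfies \emph{Exploit} also satisfies the hypotheses of Theorem~\ref{thm:negative_bias}, and both the inequality $\E[\mean{X_T^{(k)}}]\le\mu_k$ and the characterization of the equality case follow at once.

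For transparency I would also include a self-contained version of the argument specialized to $K=2$, which is essentially the proof of Theorem~\ref{thm:negative_bias} with the \emph{IIO}-based conditioning step replaced by the trivial observation above. Take $k=1$ (the case $k=2$ is symmetric). Since the samples of distribution $2$ are drawn independently of those of distribution $1$, I can condition on the random seeds $\omega_1,\dots,\omega_T$ and on a realization of $X_1^{(2)},\dots,X_T^{(2)}$ (no more than $T$ of these are ever used); it then suffices to prove $\E[\mean{X_T^{(1)}}\mid\text{this conditioning}]\le\mu_1$ and average. After this conditioning the whole trajectory $s_1,\dots,s_T$, and in particular $N_T^{(1)}$ and the prefix $X_1^{(1)},\dots,X_{N_T^{(1)}}^{(1)}$ that enters $\mean{X_T^{(1)}}$, is a deterministic function of the i.i.d.\ sequence $X_1^{(1)},X_2^{(1)},\dots$. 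Walking through the rounds at which distribution $1$ is a candidate, \emph{Exploit} says precisely that replacing distribution $1$'s history by one of equal length but larger sample mean can only leave it selected, so $N_T^{(1)}$ is monotone non-decreasing (in the coupled sense) in the partial sums $\sum_{i\le n}X_i^{(1)}$.

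From this monotonicity the inequality follows by the reversion-to-the-mean comparison sketched after Theorem~\ref{thm:negative_bias}: a prefix whose empirical mean exceeds $\mu_1$ tends to enlarge $N_T^{(1)}$ and dilute the positive fluctuation with fresh mean-$\mu_1$ draws, whereas a prefix whose empirical mean falls below $\mu_1$ is not sampled further and remains biased low. Concretely one expands $\mean{X_T^{(1)}}$ according to the value of $N_T^{(1)}$ and compares with $\mu_1$, or couples two runs whose distribution-$1$ prefixes differ only in being more versus less favorable; the equality case then forces every term of the comparison to be tight, which happens only when $N_T^{(1)}$ is independent of $\Lambda_T^{(1)}$, matching the stated conclusion.

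The one subtlety worth flagging is that $N_T^{(1)}$ is not a stopping time in the textbook sense --- it is entangled with how distribution $1$'s draws interleave with distribution $2$'s --- so the phrase ``monotone in the partial sums'' has to be set up carefully. Conditioning away distribution $2$'s samples and the seeds is what resolves this: it strips out all remaining randomness so that \emph{Exploit} can be invoked round by round along distribution $1$'s subsequence, which is exactly what makes the reversion-to-mean comparison valid. I expect this bookkeeping, rather than any genuinely new idea, to be the main work; the conceptual content is simply that \emph{IIO} costs nothing when there are only two options.
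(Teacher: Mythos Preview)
Your proposal is correct. The first reduction---observing that for $K=2$ the \emph{IIO} identity in Definition~\ref{def:iio} is the tautology $1=1$, so any $f$ satisfying \emph{Exploit} automatically meets the hypotheses of Theorem~\ref{thm:negative_bias}---is actually cleaner than what the paper does. The paper instead re-runs the opening of the proof of Theorem~\ref{thm:negative_bias} directly for $K=2$: it conditions on $\Theta$ (the realization of distribution~$2$ and the random seeds for $f$), notes that Property~1 holds without invoking the $g,h$ decomposition because ``not distribution~$1$'' already forces ``distribution~$2$,'' and then declares the remainder identical to Theorem~\ref{thm:negative_bias}. Your reduction short-circuits this by recognizing that this forcing is precisely the vacuity of \emph{IIO}, so the whole theorem applies verbatim. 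Your second, self-contained sketch is essentially the paper's route; the only caveat is that the paper's formal argument is not the reversion-to-mean intuition you cite but the maximal-mapping/case-analysis from the Appendix, so if you want that paragraph to stand on its own you should point to (or reproduce) the Case~1/Case~2 contradiction rather than the heuristic.
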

\section{QUANTITATIVE CHARACTERIZATION OF BIAS}
\label{sec:bias-analysis}
\paragraph{Analytic example with explicit bias.} Consider the setting where $K=2$, $X^{(1)} \sim Bernoulli(\mu_1)$ and $X^{(2)} \sim Bernoulli(\mu_2)$. A greedy data collection procedure is to draw one sample from each distribution in the first two rounds, and at $T=3$ sample from the distribution with the larger empirical sample mean. In the event of a tie, i.e. both samples are 0 or 1, $X^{(1)}$ is selected by default. We can derive analytic expressions for the bias of the empirical mean of each distribution at $T = 3$.    
\begin{eqnarray*}\label{eq:example_bias}
 \mbox{bias}_1 \equiv  \E\left[ \mean{X^{(1)}_3} \right] - \mu_1 &=& -\frac{1}{2}\mu_1(1-\mu_1) \mu_2,  \\
\mbox{bias}_2 \equiv \E\left[ \mean{X^{(2)}_3} \right] - \mu_2 &=& -\frac{1}{2}\mu_2(1-\mu_2)(1- \mu_1).
\end{eqnarray*}
When $0 < \mu_1, \mu_2 < 1$, both biases are strictly negative. 
This simple example already demonstrates the interesting phenomenon that \emph{the distribution with the highest mean does not always have the least bias}. Using the above analytical forms, the ratio of the biases is $\frac{\mbox{bias}_1}{\mbox{bias}_2} = \frac{\mu_1}{1-\mu_2}$. Therefore $\mbox{bias}_2$ is worse than $\mbox{bias}_1$ when $\mu_1$, $\mu_2$ are both close to 1, and $\mbox{bias}_1$ is worse than $\mbox{bias}_2$ when $\mu_1$, $\mu_2$ are both close to 0. Figure~\ref{fig:bias-sim} illustrates the quantitative bias of the first distribution $X^{(1)}$ at times $T = 3$ and $T=10$. The setup and bias is symmetric for the second distribution $X^{(2)}$.

\begin{figure}[t]
  \centering
  \begin{subfigure}{.49\columnwidth}
    \centering
    \includegraphics[width=\linewidth]{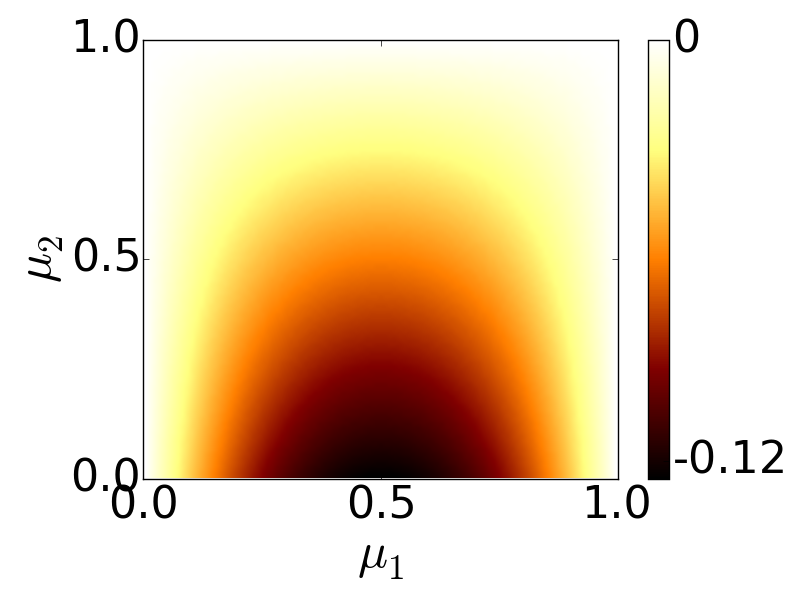}
    \caption{T=3}
  \end{subfigure}%
  \hfill
  \begin{subfigure}{.49\columnwidth}
    \centering
    \includegraphics[width=\linewidth]{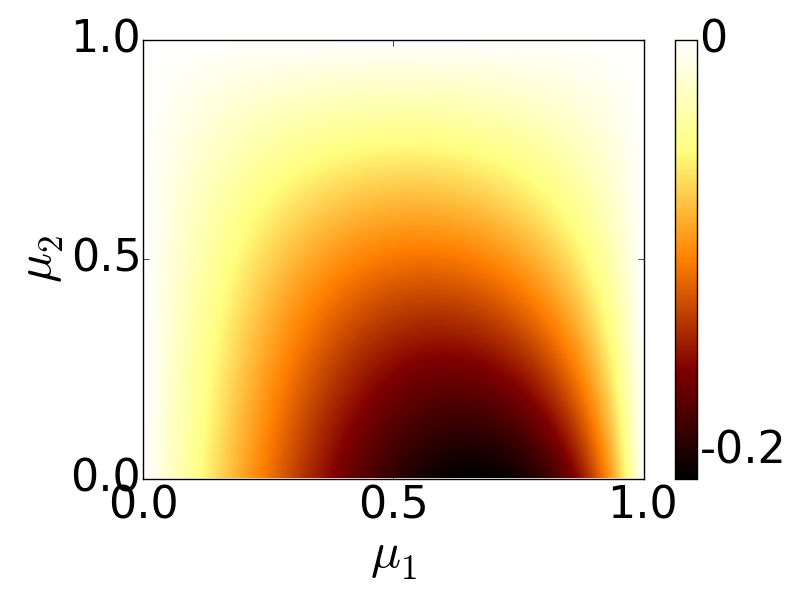}
    \caption{T=10}
  \end{subfigure}
  \caption{The bias of $X^{(1)}$ as a function of $\mu_1$ and $\mu_2$ running the Greedy algorithm, at $T=3$ and $T=10$ respectively. Note the difference in scale.}
  \label{fig:bias-sim}
\end{figure}

The insight from our proof of Theorem~\ref{thm:negative_bias} is that the bias of distribution $k$ at time $t$ should be large if how likely we are to choose $k$ in the future (after $t$) is sensitive to the value $\mean{X^{(k)}_t}$.  This sensitivity increases if there is \emph{consequential competition} for distribution $k$ at time $t$, i.e. if there are other distribution(s), $i$, whose empirical average $\mean{X^{(i)}_t}$ is in some \emph{consequential} middle range from the empirical average of distribution $k$. When they are too far apart, the particular sample values drawn from $k$ are not consequential to the chance of it getting sampled again. If they are too close, having one bad sample value also does not affect the chance of $k$ being drawn as much. We demonstrate the above remarks empirically in the next section.
\begin{figure*}[t]
  \centering
  \subcaptionbox{lil' UCB, $\mu$ scale = 1}[.31\linewidth][c]{%
    \includegraphics[width=.31\linewidth]{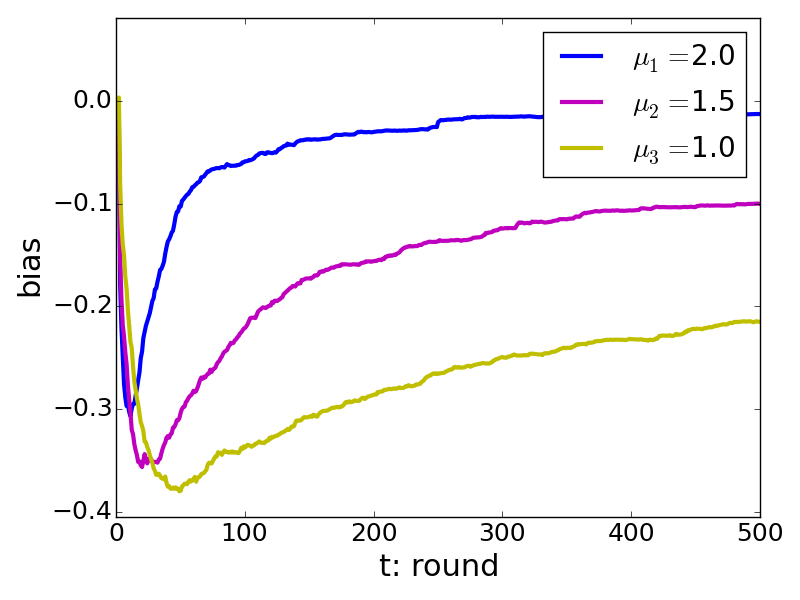}}\quad
  \subcaptionbox{lil' UCB, $\mu$ scale = 2}[.31\linewidth][c]{%
    \includegraphics[width=.31\linewidth]{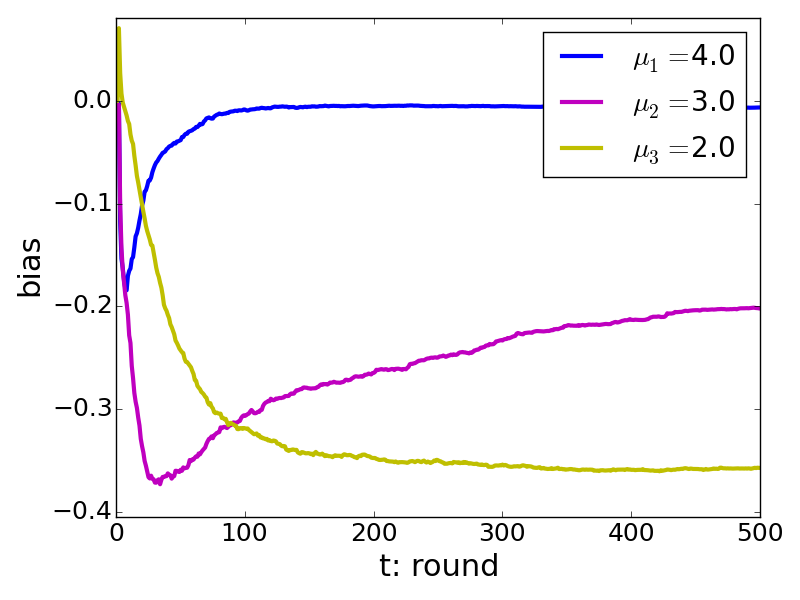}}\quad
  \subcaptionbox{lil' UCB, $\mu$ scale = 3}[.31\linewidth][c]{%
    \includegraphics[width=.31\linewidth]{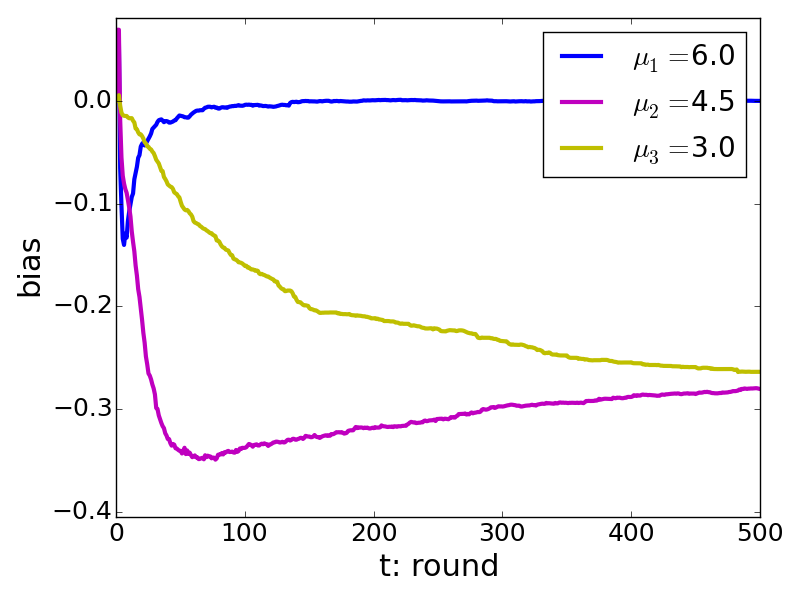}}

  \bigskip

  \subcaptionbox{Greedy, $\mu$ scale = 1}[.31\linewidth][c]{%
    \includegraphics[width=.31\linewidth]{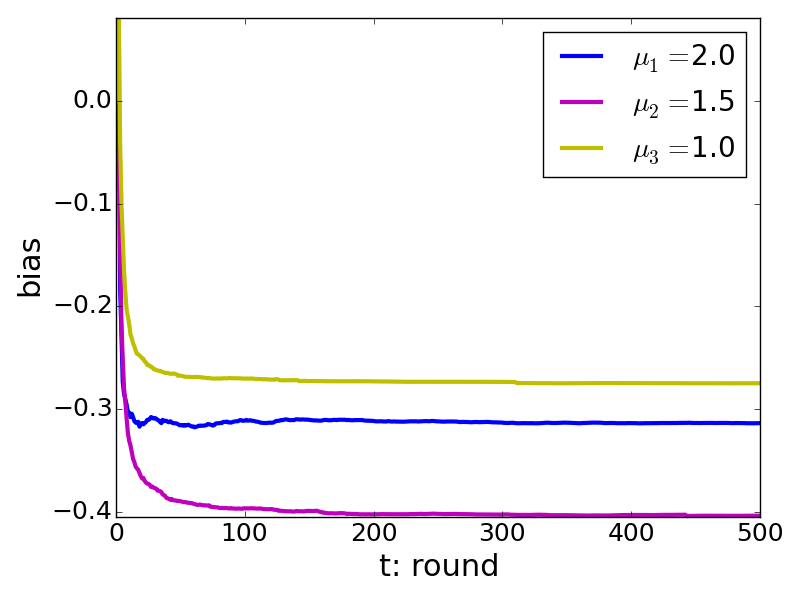}}\quad
     \subcaptionbox{\# future samples drawn given bias at t=100, with horizon T=1000}[.31\linewidth][c]{%
    \includegraphics[width=.31\linewidth]{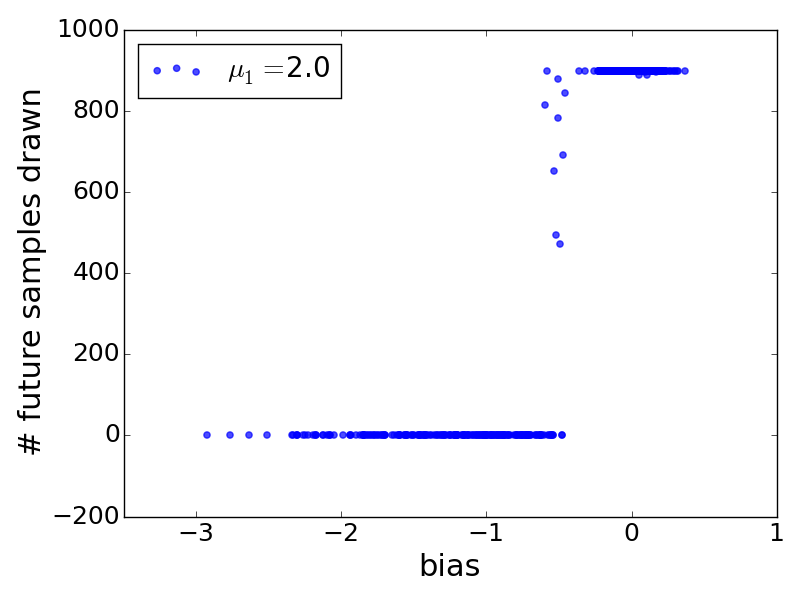}}
  \subcaptionbox{ cMLE debiasing}[.31\linewidth][c]{%
    \includegraphics[width=.31\linewidth]{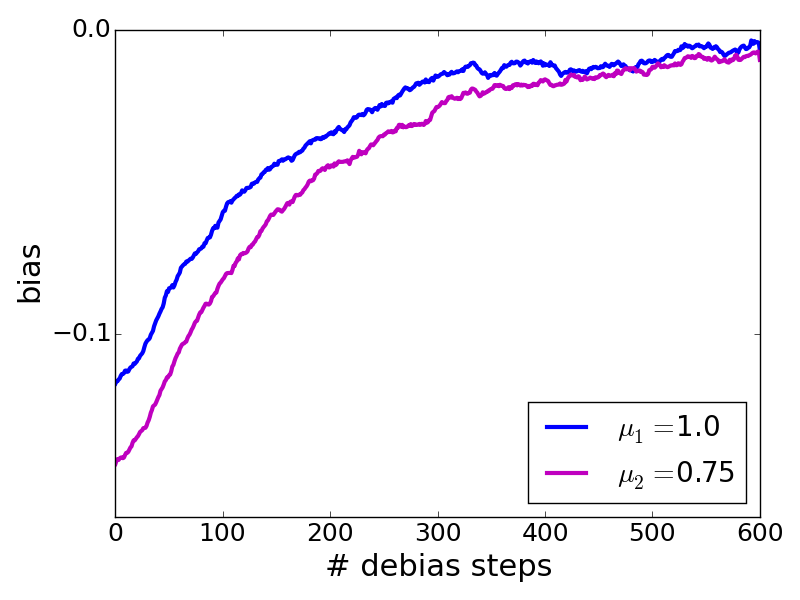}}\quad
 
\caption{In (a-c), we plot the bias of the empirical mean estimates of three unknown distributions running lil' UCB with horizon T=500. Each is distributed according to $\mathcal{N}(\mu_i,1)$, where $\mu_i$ is the mean of the $i$-th distribution, specified in the legends of the plot. We see that as we scale up $\mu_i$'s, so they become more spread out, the bias increases/decreases depending how far the $\mu_i$'s are from each other, and what is the order of the distributions. (d) plots the bias of the three unknown distributions running Greedy. (e) plots the number of future samples drawn from distribution 1 given its bias at t=100, running lil' UCB. Here T=1000 with two distributions, $\mathcal{N}(2,1)$ and $\mathcal{N}(1.5,1)$. This is a scatter plot over 1000 independent trials. (f) plots the bias as the estimate of the mean converges to the true mean across 600 gradient descent iterations. (a-d) and (f) are all averaged across 1000 independent trials.}
\label{fig:bias}
\end{figure*}
\paragraph{Experiments quantifying negative bias.}
We explore the effects on the bias from moving the distribution means apart. We used the lil' UCB algorithm, with algorithm specific parameters $\alpha=9,\beta=1,\epsilon=0.01, \delta=0.005$, which are the same as in the experiment section of \citet{jamieson2014lil}. We ran 1000 independent trials, with horizon $T=500$. We have three unknown distributions, all of the form $\mathcal{N}(\mu_i,1)$, with $\mu_1 = 2, \mu_2=1.5, \mu_3=1$. In this experiment, we scale the $\mu_i$'s by a scaling factor of $1, 2, 3$, and observe the bias of the empirical mean estimates of the three distributions. In Figure~\ref{fig:bias}(a) (b) (c), we plot the bias with the number of rounds. 

We first observe all distributions have negatively biased estimates of their true means. Further, these plots illustrate our intuition on the effect of \emph{consequential competition}. The distribution with the second best mean (the magenta curve) has worse bias as we scale up the $\mu_i$'s. We hypothesize when the distributions with the second best and the best means are close together, having one bad sample value for the second best one does not affect its chance of being sampled again as much as when their means are farther apart. On the other hand, for the distribution with the lowest true mean (the yellow curve), we observe its bias becomes worse first and then better as we scale up the $\mu_i$'s. We hypothesize that the same reason as before explains why the bias becomes worse first. However, as we further scale up the $\mu_i$'s, the bad sample values from the distribution with the lowest mean does not affect its future chances of being drawn much more than the good samples values, since its true mean is too far from the distribution with the highest mean. 

Next we compare lil' UCB with Greedy, see sub-figure $(a)$ and $(d)$ in Figure~\ref{fig:bias}. We observe that with Greedy in our setting, the empirical mean estimates for the distribution with the lowest mean has the least bias, followed by the distribution with the highest true mean. This is an example in which the distribution with the highest mean might not incur the least bias. With lil' UCB, the bias for the distribution with the highest true mean converges to 0 quickly, but with Greedy it plateaus. In lil' UCB, since it achieves optimal regret, the algorithm finds the distribution the highest true mean in finite number of time steps. The samples we get from that distribution become close to i.i.d. samples as $t$ increases, since the effect of the competition from other distributions is reduced over time. In Greedy it's known that the algorithm can be stuck on drawing from a suboptimal distribution, in which case the empirical average of the particular samples we have drawn from the distribution with the highest true mean must have a negative bias for this to happen. The bias of the best distribution thus doesn't converge to 0. Note that for both lil' UCB and Greedy, the suboptimal distributions can stay negatively biased for large $T$. However, the negative bias from running lil’ UCB is less severe in magnitude, because it uses confidence intervals to better address the issue that the empirical mean may be artificially small purely due to randomness.

Figure~\ref{fig:bias}(e) shows at round step $t=100$ with horizon $T=1000$, running lil' UCB with the same hyper-parameters in the same setting as in Figure~\ref{fig:bias}(a), we plot the number of future samples drawn from the distribution with the highest mean (i.e. $\mu=2.0$) vs. the bias from the empirical average of samples drawn so far from this distribution at time $t=100$. This confirms our intuition that large negative bias is correlated with fewer future chances of getting sampled. 

Our theoretical analysis of Theorem~\ref{thm:negative_bias} shows that the \emph{marginal} bias of each distribution is negative. The \emph{joint }bias across all the distributions---e.g. how likely is it that all the distributions have negative bias---is also an interesting question. We empirically investigated the frequency at which negative bias occurs across distributions in simulations. In Table~\ref{table:freq-bias}, we run Greedy, lil' UCB, $\epsilon$-Greedy ($\epsilon=0.1$), and Thompson Sampling (shown as "TS" in Table~\ref{table:freq-bias}) on 5 different distributions (see caption of Table~\ref{table:freq-bias} for details) for 10,000 trials, and record the fraction of trials at which any $m$ distributions all have negative bias at $T=100$, where $m=0,\cdots,5$. We observe that the results are highly skewed towards large values of $m$, suggesting that it is much more frequent that more distributions simultaneously have negative bias. We have also included additional results in a variety of settings in Appendix~\ref{app:joint-bias} that confirm this finding. Theoretical analysis of the joint bias across distributions is beyond the scope of this paper and is an interesting direction of future investigation. 

  \begin{table}[]
\centering
\caption{We run each of the following common bandit algorithms (note "TS" stands for Thompson Sampling) across 10,000 independent trials with 5 distributions, each with $\mathcal{N}(\mu_i, 1)$, where $\mu_1 = 1.0, \mu_2=0.75, \mu_3=0.5, \mu_4=0.38, \mu_5=0.25$. In each column, we record the fraction of trials in which $m$ distributions have negative bias at $T=100$, where $m=0,\cdots, 5$. In $\epsilon$-Greedy, $\epsilon=0.1$. In Thompson Sampling (TS), all distributions have prior $\mathcal{N}(0,25)$.}
\label{table:freq-bias}
\begin{tabular}{|c|c|c|c|c|c|c|}
\hline
\multirow{2}{*}{} & \multicolumn{6}{c|}{$m$:\ \# of distr. with bias $<0$} \\ \cline{2-7} 
                  & 0           & 1           & 2            & 3            & 4            & 5          \\ \hline
Greedy            & 0.02   &0.09  &0.23  &0.34  &0.24   &0.08         \\ \hline
lil' UCB           & 0.01   &0.05 & 0.21  &0.36  &0.30   &0.08        \\ \hline
$\epsilon$-Greedy &0.02  &0.12 & 0.27 & 0.33  &0.21  & 0.05         \\ \hline
TS  &0.01  & 0.08 & 0.24&  0.34 & 0.26  & 0.07        \\ \hline
\end{tabular}
\end{table}
\section{DEBIASING ALGORITHMS AND EXPERIMENTS}

\paragraph{Data splitting.}
A simple approach to obtain unbiased estimators of $\mu_k$'s is to split the data. 
Data splitting dates back to \citet{cox1975note} and has been discussed by in the context of identifying loci of interest in genetics \citep{sladek2007genome}, and online search advertising \citep{xu2013estimation}. \citet{wasserman2009high} and \citet{meinshausen2009p} discussed data splitting in high-dimensional inference. \citet{fithian2014optimal} discussed data splitting in post-selective inference. 
 Let $k$ be the distribution the selection function $f$ chooses at time $t$. Instead of taking one sample from $k$, we maintain a "held-out" set by taking an additional independent sample from $k$. We use the first samples as the sample history for $f$ which determines the future selections, and use the "held-out" set composed of the second samples for mean estimation. Since the "held-out" set is composed of i.i.d. samples that are independent of the selection process, its sample average is an unbiased estimate of $\mu_k$. However, if the total number of samples collected is fixed at $T$ rounds, then data splitting suffers from high variance, since half of all the samples are discarded in estimation. Data splitting is a natural baseline and we compare it with more sophisticated debiasing algorithms.

\paragraph{Conditional Maximum Likelihood Estimator (cMLE).} 
Data splitting is a general approach since it is agnostic to the selection function $f$. If we know the $f$ used to collect the data, then more powerful debiasing could be achieved by explicitly conditioning on the sequence of distributions that are selected by $f$ in a maximum likelihood framework. This conditioning approach is motivated by the recent successes of selective inference, which have been applied, for example, to debias the confidence intervals of the Lasso-selected features by conditioning on the Lasso algorithm \citep{taylor2015statistical}. To the best of our knowledge, our paper is the first extension of selective inference to adaptive data collection.    
To illustrate the cMLE approach, we consider the special case where  the decision on which distribution to sample at round $t$ is based on comparing the
decision statistics of the form, 
\begin{eqnarray}
\label{eq:decision_stat}
\mathbf{U}_t \overset{\Delta}{=} \left(U\left(\mean{X_t^{(1)}}, N_t^{(1)}\right), \dots, U\left(\mean{X_t^{(K)}}, N_t^{(K)}\right)\right).
\end{eqnarray}
$\mathbf{U}_t$ depends only on the empirical average $\mean{X_t^{(k)}}$'s and the number of samples $N_t^{(k)}$'s for $k\in[K]$. In other words, the selection function $f$ depends on
the history of rewards $\Lambda_t$ only through $\bfU_t$. In Greedy, 
$U\left(\mean{X_t^{(k)}}, N_t^{(k)}\right) = \mean{X_t^{(k)}}$, while in UCB type algorithms, $U_t^{(k)}$ will be the
upper confidence bounds that depend on both $\mean{X_t^{(k)}}$'s and $N_t^{(k)}$'s, where $U_t^{(k)}$
is shorthand for $U\left(\mean{X_t^{(k)}}, N_t^{(k)}\right)$. 

\begin{theorem}\label{thm:ll}
Let $s_t=f(\Lambda_t)$. Suppose the distributional function for distribution $k$ has density $h_{\theta^{(k)}}$, then
the conditional likelihood of the adaptive data collection problem is proportional to
\begin{eqnarray}\label{eq:ll}
p(\Lambda_T \mid s_t, ~t=1,\dots,T) \propto \prod_{k=1}^K \prod_{m=1}^{N_T^{(k)}} h_{\theta^{(k)}}(X_m^{(k)}) 
\notag
\\\cdot 
\prod_{t=K}^{T-1} \Parg{f(\mathbf{U}_t) = s_{t+1} \mid \mathbf{U}_t}.
\end{eqnarray}

To maximize the conditional likelihood, we need to solve the following optimization problem,
\begin{eqnarray}\label{eqn:cMLE_opt}
\max_{\theta} ~ \sum_{k=1}^K \sum_{m=1}^{N_T^{(k)}} \log\left[h_{\theta^{(k)}}(X_m^{(k)})\right]&
\notag
\\+ \sum_{t=K}^{T-1} \log\bigg[ \Parg{f(\mathbf{U}_t) = s_{t+1} \mid  \mathbf{U}_t}\bigg]&-\log Z(\theta),
\end{eqnarray}
where $\theta = (\theta^{(1)}, \dots, \theta^{(K)})$ are the parameters of interest and $Z(\theta)$
is the partition function in Eqn.~\eqref{eq:ll}, that only depends on the parameters $\theta$.

\end{theorem}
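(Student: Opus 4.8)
The plan is to write down the joint law of the observed samples together with the realized selection sequence by unrolling the sequential data-generating process one round at a time, and then to obtain the conditional density by dividing out the marginal law of the selection sequence, which will be exactly the partition function $Z(\theta)$. No appeal to Theorem~\ref{thm:negative_bias} or the structural properties \emph{Exploit}/\emph{IIO} is needed; this is a likelihood bookkeeping argument that only uses the Markov-in-history structure of the collection procedure.

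First I would fix the realized selection sequence $s_1,\dots,s_T$ (following the convention, implicit in the index ranges of Eqns.~\eqref{eq:ll}--\eqref{eqn:cMLE_opt}, that the first $K$ rounds sample each distribution once). Conditioning on the whole sequence pins down every count $N_t^{(k)}$, so $\Lambda_T$ becomes a random vector of known dimension. I would then apply the chain rule across rounds $t=K,\dots,T-1$: the history $\Lambda_t$ has already been generated; the decision statistic $\mathbf{U}_t$ is a deterministic function of $\Lambda_t$; the selection $s_{t+1}=f(\mathbf{U}_t,\omega_{t+1})$ is drawn with a fresh randomization seed $\omega_{t+1}$ that is independent across rounds and of all samples, so its conditional law contributes the factor $\Parg{f(\mathbf{U}_t)=s_{t+1}\mid\mathbf{U}_t}$; and the new observation from distribution $s_{t+1}$ is an independent draw contributing a single factor $h_{\theta^{(s_{t+1})}}$. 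Collecting the $K$ initialization densities together with these $T-K$ new-sample densities exhausts exactly the set $\{X_m^{(k)}:k\in[K],\,m\le N_T^{(k)}\}$, so the product of all $h$-factors equals $\prod_{k=1}^K\prod_{m=1}^{N_T^{(k)}} h_{\theta^{(k)}}(X_m^{(k)})$, giving
\[
p\big(\Lambda_T,\, s_{K+1},\dots,s_T\big) \;=\; \Big[\prod_{k=1}^K\prod_{m=1}^{N_T^{(k)}} h_{\theta^{(k)}}(X_m^{(k)})\Big]\cdot\prod_{t=K}^{T-1}\Parg{f(\mathbf{U}_t)=s_{t+1}\mid\mathbf{U}_t}.
\]
Next I would divide by the marginal of the selection sequence, $p(s_1,\dots,s_T)=\int p(\Lambda_T,\, s_{K+1},\dots,s_T)\, d\Lambda_T$, the integral running over all realizations of $\Lambda_T$ compatible with the fixed counts. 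After substituting $\mathbf{U}_t=\mathbf{U}_t(\Lambda_T)$ and integrating out the data, the result no longer depends on the observed $\Lambda_T$ and is a pure function of $\theta$ — this is precisely $Z(\theta)$ in Eqn.~\eqref{eq:ll}. Hence
\[
p\big(\Lambda_T \mid s_t,\ t=1,\dots,T\big) \;=\; \frac{1}{Z(\theta)}\,\prod_{k=1}^K\prod_{m=1}^{N_T^{(k)}} h_{\theta^{(k)}}(X_m^{(k)})\cdot\prod_{t=K}^{T-1}\Parg{f(\mathbf{U}_t)=s_{t+1}\mid\mathbf{U}_t},
\]
which is the claimed proportionality; taking $\log$ yields the cMLE objective \eqref{eqn:cMLE_opt}. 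It is worth flagging that the proportionality constant $Z(\theta)^{-1}$ genuinely depends on $\theta$, so it cannot be dropped when maximizing over $\theta$ — hence its explicit appearance as $-\log Z(\theta)$ in \eqref{eqn:cMLE_opt}.

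The difficulty here is bookkeeping rather than anything deep, so the main thing to get right is the justification of each chain-rule factor: (i) conditioning on $\mathbf{U}_t$ is interchangeable with conditioning on $\Lambda_t$ at the selection step, because by assumption $f$ sees the history only through $\mathbf{U}_t$; (ii) the randomization seeds are mutually independent and independent of the samples, so they enter only through the $\Parg{\cdot}$ factors and never couple different rounds; and (iii) each freshly drawn observation is conditionally independent of the past given its distribution index, so it contributes a single $h_\theta$ density. The one subtlety worth stating explicitly is that it is the act of conditioning on the selection sequence that renders the counts $N_T^{(k)}$ (and hence the dimension of $\Lambda_T$) deterministic; without that conditioning the counts are random and $Z(\theta)$ would fail to be a data-free function of $\theta$, so the proportionality would not close.
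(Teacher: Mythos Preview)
Your proof is correct and takes a genuinely different route from the paper's. You derive the joint law of $(\Lambda_T,\,s_{K+1},\dots,s_T)$ by a direct sequential chain-rule factorization across rounds, then divide by the marginal of the selection sequence to obtain the conditional. The paper instead passes through the decision-statistic space $\mathbf{U}=(\mathbf{U}_t)_t$: it invokes the \emph{selective likelihood ratio} $LR(\mathbf{U}\mid s_1,\dots,s_T)\propto\prod_{t=K}^{T-1}\Parg{f(\mathbf{U}_t)=s_{t+1}\mid\mathbf{U}_t}$ as a primitive from the selective-inference literature, writes $p_{\mathbf{U}}(\mathbf{U}\mid s_t)=h_{\mathbf{U}}(\mathbf{U})\cdot LR$ up to normalization, and then uses the change-of-variables formula twice (with cancelling Jacobians, since $\Lambda_T\mapsto\mathbf{U}$ is an invertible linear map once the counts are fixed) to transfer this back to $\Lambda_T$-space. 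Your approach is more elementary and self-contained---it needs nothing beyond the Markov-in-history structure and the assumption that $f$ depends on $\Lambda_t$ only through $\mathbf{U}_t$---whereas the paper's argument situates the result within the selective-inference framework and makes explicit the role of $\mathbf{U}$ as the sufficient statistic for selection. Both are valid; yours is arguably the more transparent derivation, while the paper's makes the connection to Eqn.~\eqref{eq:selective:lr} and the cited selective-inference machinery more visible.
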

Theorem~\ref{thm:ll} gives an explicit form for the likelihood function
of the adaptive data collection problem (up to a constant). Note that in Eqn~\ref{eq:ll}, $\Lambda_T$ contains both $X_m^{(k)}$, and $\{\bm{U}_t\}_{t=1}^T$ (recall that $\Lambda_T$ is the history of samples up to time $T$). We give a proof of Theorem~\ref{thm:ll} in Appendix~\ref{app:proofs}. 

\paragraph{Adding additional noise to the sample values to improve cMLE optimization.}
We introduce
additional randomization when selecting a distribution. The reasons are two-fold. First, we need exponential-tailed noise in randomization to achieve asymptotically consistent estimates \citep{randomized_response,selective_bayesian}. Second, adding randomization smooths out the the hard boundaries in the sample space in evaluating $\Parg{f(\mathbf{U}_t) = s_{t+1} \mid \mathbf{U}_t}$. 
For example, in Greedy,
\begin{eqnarray}
\label{eq:hardmax}
\Parg{f(\mathbf{U}_t) = s_{t+1} \mid \mathbf{U}_t} = \indic{\argmax_k \mean{X_t^{(k)}} = s_{t+1}},
\end{eqnarray}
which means to compute the cMLE, we need to maximize the log-likelihood in a
constrained region of the sample space. 
Optimization on such a region is no easy task.
Moreover, since the hard-max function induces discontinuities of likelihood along
the boundary of the constrained region, the cMLE will be ill-behaved since the gradient of the log-likelihood can become infinite \citep{randomized_response,selective_bayesian}. 

We propose adding Gumbel noise to the decision statistics $\mathbf{U}_t$ to smooth out $\Parg{f(\mathbf{U}_t) = s_{t+1} \mid \mathbf{U}_t}$. Note that we could also use other heavy-tailed distributions for the added noise. The Gumbel distribution offers computational convenience, since  
\begin{eqnarray}
\label{eq:softmax}
\Parg{f(\mathbf{U}_t) = k \mid \mathbf{U}_t} = \frac{\exp[U_t^{(k)} / \tau_t]}{\sum_{i=1}^K \exp[U_t^{(i)}/\tau_t]}
\end{eqnarray}
has a closed form due to the Gumbel-max trick \citep{gumbel1954statistical} (also see Lemma~\ref{lem:gumbel} in Appendix~\ref{app:gumbel}). 

We can now optimize Eqn.~\ref{eqn:cMLE_opt} using contrastive divergence \citep{contrastive_divergence}. Details of the algorithm are in Appendix~\ref{app:cmle-cd}. For lil' UCB or Greedy, we can compute $\mathbf{U}_t$ deterministically
from $\mean{\mathbf{X}_t}$ and $\mathbf{N}_t$. The selection function after Gumbel randomization is
defined as
\begin{eqnarray}
f(\mathbf{U}_t) = \argmax_{k} U_t^{(k)} + \epsilon_t^{(k)}, \quad \epsilon_t^{(k)} \overset{\textrm{iid}}{\sim} G_{\tau_t},
\end{eqnarray}
where $G_{\tau}$ is a Gumbel distribution of mean $0$ and scale parameter $\tau$. Similarly, we can also add Gumbel noise to $\epsilon$-Greedy to derive smooth conditional probabilities. We give examples of computing the conditional likelihood functions of common bandit algorithms with added Gumbel noise in Appendix~\ref{app:example-condi}. 

We summarize the debiasing procedure in Algorithm~\ref{alg:debias}. Note that we only compute cMLE with contrastive divergence (see Algorithm~\ref{alg:CD} in Appendix~\ref{app:cmle-cd}) once at time step $T$ when we wish to debias the estimates. Note that with these smooth $\Parg{f(\mathbf{U}_t) = k \mid \mathbf{U}_t}$ in Eqn.~\ref{eq:softmax}, we have well-behaved gradients in the parameter updates in computing cMLE.
\setlength{\textfloatsep}{5pt}
\begin{algorithm}
\begin{algorithmic}
\State {\bf Add Gumbel noise} when choosing which distribution to sample from at each time step $t$. Instead of applying the
selection function directly to $\bfU_t$, we apply it to 
$$
\left(U_t^{(k)} + \epsilon_t^{(k)}\right), \quad k=1,\dots,K
$$
where $\epsilon_t^{(k)} \overset{\textrm{iid}}{\sim} G_{\tau_t}$.
\State {\bf Compute conditional likelihood} by computing the selection probabilities,
$$
\Psubarg{\epsilon_t}{f(\bfU_t) = s_{t+1} \mid \bfU_t}.
$$
Note that here $f$ incorporates the randomness of Gumbel randomizations $\{\epsilon_t^{(k)}\}_{k\in[K]}$ as well as
the randomness in the original bandit algorithm.
\State {\bf Compute cMLE} using approximate gradient descent
with contrastive divergence. 
\end{algorithmic}
\caption{Debiasing algorithm using cMLE. Note that we only compute cMLE with contrastive divergence (see Algorithm \ref{alg:CD} in Appendix \ref{app:cmle-cd}) once at time step $T$ when we wish to debias the estimates.}
\label{alg:debias}
\end{algorithm}

\begin{table*}[ht]
\centering
\caption{\textbf{Bias Reduction.} With $K=2$, each distribution is drawn from $\mathcal{N}(\mu_i,1)$. where $\mu_1=1.0, \mu_2=0.75 $. With $K=5$, each distribution is drawn from $\mathcal{N}(\mu_i,1)$. where $\mu_1=1.0, \mu_2=0.75, \mu_3=0.5, \mu_4=0.38, \mu_5=0.25.$ In the left columns under each algorithm, we record the bias of the original algorithm at different time steps $T$. In the right columns, we record the percentage of the original bias that still remains after we run cMLE by adding gumbel noise $\epsilon_g \sim G_{\tau}$, with scale parameter $\tau = 1.0$, and contrastive divergence with 600 gradient descent iterations. All results are averaged across 1000 independent trials.}
\begin{tabular}{|c|c|c|c|c|c|c|}
\hline
\multirow{2}{*}{} & \multicolumn{2}{c|}{lil' UCB} & \multicolumn{2}{c|}{$\epsilon$-Greedy ($\epsilon=0.1$)} & \multicolumn{2}{c|}{Greedy} \\ \cline{2-7} 
                  & orig.        & cMLE        & orig.                      & cMLE                     & orig.        & cMLE       \\ \hline
T=8,K=2           & -0.26        & \textbf{6.2\%}         & -0.25                      & \textbf{7.3\%}                      & -0.29        & \textbf{2.8\%}        \\ \hline
T=16,K=2          & -0.29        & \textbf{5.2\%}         & -0.25                      & \textbf{1.6\%}                      & -0.32        & \textbf{8.3\%}        \\ \hline
T=20,K=5          & -0.32        & \textbf{14.9\%}        & -0.31                      & \textbf{9.1\%}                      & -0.35        & \textbf{18.0\%}       \\ \hline
T=40,K=5          & -0.35        & \textbf{14.2\%}        & -0.27                      & \textbf{8.8\%}                      & -0.37        & \textbf{15.9\%}       \\ \hline
\end{tabular}
\label{table:bias-full}
\end{table*}
\begin{table*}[ht]
\centering
\caption{\textbf{Mean Squared Error(MSE) reduction.} The experiment setup is the same as in Table \ref{table:bias-full}. The leftmost columns under each algorithm is the MSE of the original algorithm. The second to the left columns are the MSE percentage ratio of the data splitting with a held-out set compared to the MSE of the original algorithm. The right columns are the MSE percentage ratio of the cMLE algorithm after debiasing compared to the MSE of the original algorithm. For $\epsilon$-Greedy, we additionally run propensity matching (prop). Note that both data splitting and prop suffer from high variance despite achieving consistent estimation.}
\label{table:mse-full}
\begin{tabular}{|c|c|c|c|c|c|c|c|c|c|c|}
\hline
\multirow{2}{*}{} & \multicolumn{3}{c|}{lil' UCB}  & \multicolumn{4}{c|}{$\epsilon-$Greedy($\epsilon=0.1$)} & \multicolumn{3}{c|}{Greedy}  \\ \cline{2-11} 
                  & orig. & held  & cMLE        & orig.          & held     &prop     & cMLE                & orig & held  & cMLE        \\ \hline
T=8,K=2           & 0.56  & 108\% & \textbf{86\%} & 0.51           & 123\%  & 295\%        & \textbf{76\%}         & 0.56 & 108\% & \textbf{78\%} \\ \hline
T=16,K=2          & 0.50  & 101\% & \textbf{40\%} & 0.38           & 123\%   & 244\%       & \textbf{52\%}         & 0.53 & 107\% & \textbf{45\%} \\ \hline
T=20,K=5          & 0.57  & 112\% & \textbf{99\%} & 0.52           & 123\%   & 399\%         & \textbf{94\%}         & 0.59 & 111\% & \textbf{89\%} \\ \hline
T=40,K=5          & 0.54  & 104\% & \textbf{52\%} & 0.39           & 135\%   & 290\%       & \textbf{62\%}         & 0.54 & 107\% & \textbf{52\%} \\ \hline
\end{tabular}
\end{table*}

\paragraph{Debiasing experiments.}
We empirically show that the cMLE algorithm can reduce bias significantly and reduce the mean squared error (MSE) as well. In Table~\ref{table:bias-full}, we see significant bias reduction for the lil' UCB, $\epsilon$-Greedy, and Greedy algorithms using the cMLE debiasing algorithm, in both the $K=2$ and $K=5$ cases, where $K$ is the number of distributions. All of our experiments used the same implementation of cMLE with the same hyper-parameters to demonstrate that cMLE can be robustly applied to different distribution settings without fine-tuning.  We could still have some residual bias after running cMLE. This is due to the guarantee of asymptotic consistency with added heavy-tailed noise (i.e. the bias tends to 0 as $T$ tends to infinity), and $T$ is finite in our experiments. Table~\ref{table:mse-full} shows the reduction of MSE. The data splitting algorithm achieves consistent estimates, but it incurs high variance since the effective sample size is halved by maintaining a held-out set. Empirically we observe that data splitting suffers from high MSE. 
In Figure~\ref{fig:bias}(f), we run Greedy with cMLE, with two distributions, $\mathcal{N}(1,1)$ and $\mathcal{N}(0.75,1)$. We show the convergence of the estimated mean to the true mean as we run gradient descent over 600 iterations. 
We see that cMLE significantly reduces the bias, while improving the MSE.  We also experimented with propensity matching, a commonly used method that weights each observed value of a distribution by one over the probability that this distribution is selected \citep{austin2011introduction}. Propensity matching  is unbiased, but has very large variance and thus a much greater MSE by several fold compared to cMLE. We discuss it in more detail in Appendix~\ref{app:prop}. 

In both Table~\ref{table:bias-full} and Table~\ref{table:mse-full}, we looked at the cases where the horizon $T$ is relatively small. The reasons are two-fold. First, a relatively small $T$ is relevant in many biomedical settings, where the scientist adaptively collects data from several arms corresponding to different experimental conditions, and such collection procedures are expensive. Second, empirically, the magnitude of the bias tends to be the largest when $t$ is small (c.f.Figure~\ref{fig:bias} (a)-(d)), so we focused on cases where the bias is the largest to demonstrate the effectiveness of cMLE to debias the empirical estimates. 
For $\epsilon$-Greedy, for example, the biases of all the distributions are essentially 0 when $T$ is large, and cMLE is not needed in this regime.  
For the Greedy algorithm, the bias of the suboptimal distributions are stuck at quite negative values even for very large $T$ because they are not sampled again after the first few rounds. For completeness, we have also included the debiasing results for the Greedy algorithm when the horizon is large ($T=1000$) in Table~\ref{table:greedy-t-1000}. We observe here that cMLE can almost completely debias the Greedy algorithm. This is expected since cMLE is asymptotically consistent, so as $T$ grows, the bias reduces to 0 significantly. The mean squared error (MSE) has also reduced to a negligible amount for cMLE, but remains huge for the data splitting method. We have also included results running Thompson Sampling in Appendix~\ref{app:thompson} for completeness. 
\begin{table}[!htb]
\centering
\caption{The bias and mean squared error (MSE) of running the Greedy algorithm with $T=1000$. The experiment setup is the same as in Table~\ref{table:bias-full}, for $K=2$ and $K=5$, where $K$ is the number of distributions of interest. The columns under "Bias" record the bias incurred by the original algorithm under "orig.", and the percentage of bias remaining after running the cMLE algorithm under "cMLE". The columns under "MSE" record the mean squared error (MSE) under the original greedy algorithm, the percentage of MSE running the data splitting method (i.e. using a held-out set of samples) in comparison to the original MSE, and the percentage of MSE running cMLE in comparison to the original MSE. We observe the cMLE has superior performance in both bias and MSE reduction.}
\label{table:greedy-t-1000}
\begin{tabular}{|c|c|c|c|c|c|}
\hline
\multirow{2}{*}{} & \multicolumn{2}{c|}{Bias} & \multicolumn{3}{c|}{MSE} \\ \cline{2-6} 
& orig.        & cMLE       & orig.  & held-out & cMLE \\ \hline
K=2 & -0.2 & \textbf{0.0\%} & 0.255 & 89.8\% & \textbf{0.4\%}\\ \hline
K=5 & -0.21 & \textbf{1.0\%} & 0.277 & 94.9\%& \textbf{1.1\%}\\ \hline
\end{tabular}
\end{table}
\section{DISCUSSION}
Our main result shows that adaptively collected data are negatively biased when the data collection algorithm $f$ satisfies \emph{Exploit} and \emph{IIO}. This seems counterintuitive because we typically associate optimization (as in exploitative algorithms) with a positive selection bias (i.e.Winner's Curse). For example, if we draw 10 samples from $\mathcal{N}(0,1)$ and report the $\max$, then we have positive reporting bias. The reason for the discrepancy between these phenomena is that for any sample history of data, the ``best'' option $k$'s sample mean is likely to be larger than its true mean. However who is the ``best'' varies in different sample paths, and the bias of each distribution $k$ is negative in expectation. 

We explored data splitting and cMLE as two approaches to reduce this bias. Data splitting is unbiased but suffers larger MSE because it ignores half of the samples during estimation. cMLE can reduce bias close to 0 while also reducing MSE. The trade-off is that it requires specific knowledge about $f$ and also requires one to add additional noise to the collected data. Both approaches require modifying the data collection procedure and cannot be generically applied to debias existing adaptively collected data.
As adaptively collected data are ubiquitous, developing flexible debiasing approaches to debias observational data is an important direction of future research.    

\newpage
\bibliographystyle{abbrvnat}
\bibliography{references}
\nocite{*}

\newpage
\onecolumn
\appendix
\section{lil' UCB Algorithm} \label{app:lil' UCB}
lil' UCB Algorithm is proposed by \citet{jamieson2014lil}, and achieves optimal regret. It has become one of the most popular upper confidence bound type algorithms. 

In lil' UCB, the selection function 
\begin{align}
f(\Lambda_t) = \argmax_k \mean{X_t^{(k)}} + (1+\beta)(1+\sqrt{\epsilon}) \sqrt{\frac{2(1+\epsilon)\log (\frac{\log((1+\epsilon)N_t^{(k)})}{\delta})}{N_t^{(k)}}}
\end{align}
where $N_t^{(k)}$ is the number of times arm $k$ gets pulled by time $t$, and $\mean{X_t^{(k)}} \equiv \frac{\sum_{i=1}^{N_t^{(k)}} X_i^{(k)}}{N_t^{(k)}}$.
$\epsilon, \delta, \beta$ are lil' UCB hyper-parameters as specified in \citet{jamieson2014lil}.

\section{Proofs of the main results}\label{app:proofs}
\begin{proof}[Proof of Theorem~\ref{thm:negative_bias}]
Without loss of generality, we focus on showing that distribution 1 has negative bias. The argument applies directly to every other distribution. 
For a given history $\Lambda_t$, $f(\Lambda_t)$ is a random variable over $[K]$.   
We define two independent random variables based on $f(\Lambda_t)$. Let $g(\Lambda_t) =\indic{f(\Lambda_t)=1}$. Let $h\left(\Lambda_t^{(-1)}\right) = f(\Lambda_t) | f(\Lambda_t)\neq 1$ be a random variable with support $\{2,...,K\}$, such that for $k\in \{2, ..., K\}$,
\[
 \Pr\left[h\left(\Lambda_t^{(-1)}\right) = k\right] = \Pr[f(\Lambda_t) = k | f(\Lambda_t) \neq 1] = \frac{\Pr[f(\Lambda_t) = k]}{\sum_{i=2}^K  \Pr[f(\Lambda_t) = i]}.
 \]
  Note that $f$ satisfies \emph{IIO} implies that the law of $h$ is only a function of $\Lambda_t^{(-1)}$, which is the history only of the distributions $2, ..., K$ up to time $t$. It's clear that distribution selection by $s_{t+1} = f(\Lambda_t)$ is equivalent to  (i.e. have the same law as) 
\begin{eqnarray}
  s_{t+1}=\begin{cases}
    1, & \text{if $g(\Lambda_t)=1$}.\\
    k, & \text{if $g(\Lambda_t)=0$, $h(\Lambda_t^{(-1)})=k$, $k\in[2,K]$}.
  \end{cases}
  \end{eqnarray}
Since this equivalence holds for every $t$, the adaptive data collection procedure is defined by the independent random variables $g(\Lambda_t)$ and $h(\Lambda_t^{(-1)})$. 

To study distribution 1 we condition on the realization $\Theta$, where $\Theta$ includes the realizations of distributions $k$ for $k \in \{2, ..., K\}$ and $T$ random seeds for $g$ and $h$, $\{\omega_{g,t}, \omega_{h,t}\}_{t=1}^{T}$. More precisely, $\Theta=\{\{x_t^{(k)}\}_{t=1}^T, \{\omega_{g,t}, \omega_{h,t}\}_{t=1}^T, k\in[2,K]\}$, where $x_t^{(k)}$ is a realized value of a sample drawn from distribution $k$ at round $t$. Then given any realization of distribution 1, $\bm{\sigma} = (\sigma_1,\sigma_2,\dots,\sigma_T)$, $\sigma_i \in \mathbb{R}$, conditioning on $\Theta$ induces a deterministic mapping $S(\bm{\sigma}) = (t_1, ..., t_T)$, where $t_i$ is a positive integer corresponding to the time when the $i$-th sampling of distribution 1 occurs. Note that $t_i \in [T]\bigcup \{*\}$, where $t_i = *$ indicates that the $i$-th drawing occurs after time $T$. 
Since all the other distributions' realization and randomness are fixed, $t_i$ is a deterministic function of $(\sigma_1, ..., \sigma_{i-1})$.  
 
Let $\tilde{t}_j$ indicate the round at which distribution 1 is \emph{not} selected for the $j$-th time, then  IIO implies $s_{\tilde{t}_j} = h(\Lambda_{\tilde{t}_j-1}^{(-1)}, \omega_{h,j})$. Which distribution among $2, \dots, K$ is selected is determined by $\Lambda_{\tilde{t}_j-1}^{(-1)}$, which is the history of distributions $2,\dots, K$ up to time $\tilde{t}_j-1$. Note that $s_{\tilde{t}_j}$ is a function of $\omega_{h,j}$ not $\omega_{h,\tilde{t}_j}$; i.e. the random seeds $\omega_{h,j}$ is only used when distribution 1 is not selected. From this observation, we see an important property of conditioning on $\Theta$.

\paragraph{Property 1.} If $\tilde{t}_j$ indicates the round at which distribution 1 is \emph{not} selected for the $j$-th time, then the history $\Lambda_{\tilde{t}_j}^{(-1)}$ is completely determined by the index $j$.

Our goal is to show that for an arbitrary realization $\Theta$, $\E \left[\mean{X^{(1)}_T} | \Theta \right] \leq \mu_1$. Then it would follow that  $\E \left[\mean{X^{(1)}_T}  \right] \leq \mu_1$. As we discussed above, after conditioning on $\Theta$, the data collection procedure is equivalent to a mapping $S(\bm{\sigma}) = (t_1,...,t_T)$. For a given path $\bm{\sigma} = (\sigma_1,...,\sigma_T)$, let $n_{\bm{\sigma}} = |\{t_i: t_i \leq T\}|$ be the number of times distribution 1 is selected by round $T$.  $S$ depends on $\Theta$, but we will not write this explicitly to simplify notation. Moreover,  $\Pr[\bm{\sigma} | \Theta] = \Pr[\bm{\sigma}] $ since the values of distribution 1 is independent of the realizations of the other distributions and the randomness in the selections. Therefore, 
\[
\E \left[\mean{X^{(1)}_T} | \Theta \right] = \sum_{\bm{\sigma}} \Pr[\bm{\sigma}]\frac{ \sum_{i=1}^{n_{\bm{\sigma}}} \sigma_i}{n_{\bm{\sigma}}}.
\]

Our proof strategy is to show that any mapping $S$ from paths $\bm{\sigma}$ to sets of times $(t_1, ..., t_T)$ which satisfies \emph{Exploit} condition must have bias $\leq$ 0. It suffices to consider the mapping $S$ corresponding to the largest $\E \left[\mean{X^{(1)}_T} | \Theta \right]$ and still satisfies \emph{Exploit}. We show that such a mapping $S$ must have the property that $n_{\bm{\sigma}}$ is the same constant for all path $\bm{\sigma}$. For such an $S$, it is immediate that $\E \left[\mean{X^{(1)}_T} | \Theta \right] = \mu_1$. 

Suppose for a maximal mapping $S$, $n_{\bm{\sigma}}$ differs for different $\bm{\sigma}$. Let $l$ be the largest integer for which there exist two paths $\bm{\sigma}$ and $\bm{\sigma}'$ such that $\sigma_i = \sigma'_i $ for $i < l$ and $n_{\bm{\sigma}} \neq n_{\bm{\sigma}'}$. So $\bm{\sigma}$ and $\bm{\sigma}'$ agree up to the $l-1$st drawing of distribution 1. We denote $\alpha \equiv \sigma_l$ and   $\alpha' \equiv \sigma'_l$; without loss of generality we can assume $\alpha < \alpha'$. 

\paragraph{Property 2.} The fact that $l$ is the largest such index implies that if $\bm{\sigma}''$ is any other path such that $\bm{\sigma}''_i = \sigma_i$ for $i \leq l$ then $n_{\bm{\sigma}''} = n_{\bm{\sigma}}$. Similarly if $\sigma''_i = \sigma'_i$ for $i \leq l$ then $n_{\bm{\sigma}''} = n_{\bm{\sigma}'}$.

There are two possible cases and we show that they both lead to contradictions. This would complete the proof by contradiction. 

\paragraph{Case 1: $n_{\bm{\sigma}} > n_{\bm{\sigma}'}$.} Consider the two paths $\bm{\lambda} = (\sigma_1,...,\sigma_{l-1},\alpha,\lambda_{l+1},...,\lambda_{T})$ and $\bm{\lambda'} = (\sigma_1,...,\sigma_{l-1},\alpha',\lambda_{l+1},...,\lambda_{T})$, where $\lambda_{l+1}...\lambda_{T}$ is some arbitrary fixed string of realizations. Property 2 implies that $n_{\bm{\lambda}} = n_{\bm{\sigma}} > n_{\bm{\sigma}'} = n_{\bm{\lambda}'}$. Under the mapping $S$, $\bm{\lambda}$ and $\bm{\lambda}'$ maps onto two sets of times $\{t_{\lambda,i}\}_{i=1}^T$ and $\{t_{\lambda',i}\}_{i=1}^T$, where $t_{\lambda,i}$ (resp. $t_{\lambda',i}$) is the round at which distribution 1 is drawn the $i$-th time under the realization $\bm{\lambda}$ (resp. $\bm{\lambda}'$). Since at least the first $l-1$ terms of $\bm{\lambda}$ and $\bm{\lambda}'$ are equal, at least the first $l$ terms of $t_{\bm{\lambda},i}$ and $t_{\bm{\lambda}',i}$ are equal since the $k$-th term of $t_{\bm{\lambda},i}$ depends on the first $k-1$ terms of $\bm{\lambda}$ for all $0 < k \leq T$.   Let $l_1 > l$ be the first index where $t_{\bm{\lambda},l_1}<t_{\bm{\lambda}',l_1}$. There must exist such a $l_1$ in order for  $n_{\bm{\lambda}} > n_{\bm{\lambda}'}$. 

Consider the round $t^* = t_{\bm{\lambda}, l_1} -1$. The histories up to round $t^*$ of paths $\bm{\lambda}$ and $\bm{\lambda}'$, i.e. $\Lambda^{(-1)}_{\bm{\lambda}, t^*}$ and $\Lambda^{(-1)}_{\bm{\lambda}', t^*}$, are identical because in both paths distribution 1 has been selected $l_1-1$ times by round $t^*$ (by Property 1). Moreover the empirical average of distribution 1 under $\bm{\lambda}$ is strictly lower than the average under $\bm{\lambda}'$. \emph{Exploit} property states that $g(\Lambda_{\bm{\lambda}, t^*}, \omega_{g, t^*}) = 1= f(\Lambda_{\bm{\lambda}, t^*}, \omega_{g, t^*}) $ implies $f(\Lambda_{\bm{\lambda}', t^*}, \omega_{g, t^*}) = 1 = g(\Lambda_{\bm{\lambda}', t^*}, \omega_{g, t^*})$. This implies that $t_{\bm{\lambda},l_1}=t_{\bm{\lambda}',l_1}$, contradicting $t_{\bm{\lambda},l_1}<t_{\bm{\lambda}',l_1}$. Therefore the scenario $n_{\bm{\sigma}} > n_{\bm{\sigma}'}$ is not possible if $f$ satisfies \emph{Exploit}. Note that for any $\Lambda_t$,  we can use the same probability space $\Omega$ for $g(\Lambda_t)$ and $f(\Lambda_t)$ such that  $\{\omega:  g(\Lambda_t, \omega) = 1 \} = \{\omega:  f(\Lambda_t, \omega) = 1 \}$.
 
 \paragraph{Case 2: $n_{\bm{\sigma}} < n_{\bm{\sigma}'}$.} By Property 2,  all the path where the first $l$ terms are $\sigma_1...\sigma_{l-1}\alpha$ have $n_{\bm{\sigma}}$ total number of draws. The contribution of these paths to the average $\mean{X^{(1)}_T} $ is 
\[ 
  \E\left[\mean{X^{(1)}_T}|\Theta, \sigma_1,...,\sigma_{l-1},\alpha \right] = \frac{\sum_{i=1}^{l-1}\sigma_i+\alpha+(n_{\bm{\sigma}} - l)\mu_1 }{n_{\bm{\sigma}}}.
  \] 
  
  Similarly, all the path where the first $l$ terms are $\sigma_1...\sigma_{l-1}\alpha'$ have $n_{\bm{\sigma}'}$ total number of draws. The contribution of these paths to the average $\mean{X^{(1)}_T} $ is 
\[ 
  \E\left[\mean{X^{(1)}_T}|\Theta, \sigma_1,...,\sigma_{l-1},\alpha' \right] = \frac{\sum_{i=1}^{l-1}\sigma_i+\alpha'+(n_{\bm{\sigma}'} - l)\mu_1 }{n_{\bm{\sigma}'}}.
  \] 
Since $\frac{\sum_{i=1}^{l-1} \sigma_i + \alpha}{l} < \frac{\sum_{i=1}^{l-1} \sigma_i + \alpha'}{l}$, we must have either of the following hold: 
 \begin{enumerate}
 \item $\frac{\sum_{i=1}^{l-1} \sigma_i + \alpha}{l} < \mu_1$. If this holds true, then the paths where the first $l$ terms are $\sigma_1...\sigma_{l-1}\alpha$ can have $m$ instead of $n_{\bm{\sigma}}$ total number of draws, where $n_{\bm{\sigma}} < m \leq n_{\bm{\sigma}'}$. Note that $\frac{\sum_{i=1}^{l-1}\sigma_i+\alpha+(n_{\bm{\sigma}} - l)\mu_1 }{n_{\bm{\sigma}}} < \frac{\sum_{i=1}^{l-1}\sigma_i+\alpha+(m - l)\mu_1 }{m}$. This modification preserves \emph{Exploit} property while increasing $\E\left[\mean{X^{(1)}_T}|\Theta, \sigma_1,...,\sigma_{l-1},\alpha \right]$, and thus increasing the $\E\left[\mean{X^{(1)}_T}|\Theta \right] $ of $S$. This contradicts the assumption that $S$ is the maximal mapping. 
 
 \item $\frac{\sum_{i=1}^{l-1} \sigma_i + \alpha'}{l} > \mu_1$. If this holds true, then the paths where the first $l$ terms are $\sigma_1...\sigma_{l-1}\alpha'$ can have $m'$ instead of $n_{\bm{\sigma}'}$ total number of draws, where $n_{\bm{\sigma}} \leq m < n_{\bm{\sigma}'}$. Note that $\frac{\sum_{i=1}^{l-1}\sigma_i+\alpha'+(n_{\bm{\sigma}'} - l)\mu_1 }{n_{\bm{\sigma}'}} < \frac{\sum_{i=1}^{l-1}\sigma_i+\alpha'+(m - l)\mu_1 }{m}$. This modification preserves \emph{Exploit} property while increasing $\E\left[\mean{X^{(1)}_T}|\Theta, \sigma_1,...,\sigma_{l-1},\alpha' \right]$, and thus increasing the $\E\left[\mean{X^{(1)}_T}|\Theta \right] $ of $S$. This contradicts the assumption that $S$ is the maximal mapping. 
 \end{enumerate}
 
The case analysis proves that in order for $S$ to be the mapping corresponding to the maximal $\E\left[\mean{X^{(1)}_T} | \Theta \right] $ it must assign the same  constant $n_{\bm{\sigma}}$ for all path $\bm{\sigma}$, i.e. the number of times distribution 1 is selected does not depend on its observed values.  Such a mapping is unbiased: $\E\left[\mean{X^{(1)}_T} | \Theta \right] = \mu_1$.
\end{proof}

\begin{proof}[Proof of Proposition.~\ref{prop:algs_satifies}]
For any algorithm with  the following form of the selection function,
\begin{eqnarray}
\label{eq:prop1_proof}
f\left(\Lambda_t^{(k)} \cup \Lambda_t^{(-k)}\right) = \argmax_{k\in[K]} U_t^{(k)}\left(\mean{X_t^{(k)}}, N_{t}^{(k)},\omega\right),
\end{eqnarray}
such that conditioning on $\Lambda_t^{(k)}$ and $\Lambda_t^{'(k)}$ with $N_t^{(k)}=N_t^{'(k)}$, and $\mean{X_t^{(k)}} < \mean{X_t^{'(k)}}$, and fixing $\Lambda_t^{(-k)}$ and $\omega$, we have $U_t^{(k)}(\mean{X_t^{(k)}}, N_{t}^{(k)},\omega) < U_t^{'(k)}(\mean{X_t^{'(k)}}, N_{t}^{'(k)},\omega)$, then it satisfies Exploit by definition. We show lil' UCB, Greedy, and $\epsilon$-Greedy can all be written in the form of Eqn.~\ref{eq:prop1_proof}. 

In lil' UCB,   
\begin{eqnarray}
U_t^{(k)}\left(\mean{X_t^{(k)}}, N_{t}^{(k)}, \omega\right) =U_t^{(k)}\left(\mean{X_t^{(k)}}, N_{t}^{(k)}\right) =  \mean{X_t^{(k)}} + (1+\beta)(1+\sqrt{\epsilon}) \sqrt{\frac{2(1+\epsilon)\log (\frac{\log((1+\epsilon)N_{t}^{(k)})}{\delta})}{N_t^{(k)}}}
\end{eqnarray}
where $\epsilon, \delta, \beta$ are lil' UCB hyper-parameters as specified in \citet{jamieson2014lil}. 
In Greedy, 
\begin{eqnarray}
U_t^{(k)}(\mean{X_t^{(k)}}, N_{t}^{(k)}, \omega) =U_t^{(k)}(\mean{X_t^{(k)}})= \mean{X_t^{(k)}}
\end{eqnarray}
In $\epsilon$-Greedy, 
\begin{eqnarray}
U_t^{(k)}(\mean{X_t^{(k)}}, N_{t}^{(k)}, \omega) =\begin{cases}
\mean{X_t^{(k)}}, & \text{ if } \omega >\epsilon\\
- & \text{ if } \omega< \epsilon
\end{cases}
\label{prop1:egreedy}
\end{eqnarray}

In Eqn.~\ref{prop1:egreedy}, when $\omega < \epsilon$, since we condition on $\omega$, it is trivially true that $f(\Lambda_t^{(k)} \cup \Lambda_t^{(-k)}) = k$ implies $f(\Lambda_t^{'(k)} \cup \Lambda_t^{(-k)}) = k$. In all of the above algorithms, $U_t^{(k)}$ monotonically increases as $\mean{X_t^{(k)}}$ increases, conditioning on $\omega$ and $N_t{(k)}$ fixed. Thus all three algorithms satisfy Exploit.

lil' UCB and greedy trivially satisfy IIO because they are deterministic algorithms. For $\epsilon$-Greedy, conditioning on $f(\Lambda_t) \neq k$ and $f(\Lambda_t) \neq k$, and $\Lambda_t^{(-k)}$, if $\omega > \epsilon$, then $f(\Lambda_t,\omega)$ is determined by $\Lambda_t^{(-k)}$. If $\omega <\epsilon$, then all the $K-1$ distributions are uniformly chosen in both cases.  
\end{proof}
\begin{proof}[Proof of Proposition.~\ref{prop:K=2}]
Without loss of generality, we focus on showing that distribution 1 has negative bias. We modify the arguments used to prove Theorem~\ref{thm:negative_bias}. To study distribution 1 we condition on the realization $\Theta$, where $\Theta$ includes the realization of distribution 2 and $T$ random seeds for $f$, $\{\omega_{t}\}_{t=1}^{T}$.  Then given any realization of distribution 1, $\sigma = (\sigma_1,\sigma_2,...,\sigma_T)$, $\sigma_i \in \mathbb{R}$, conditioning on $\Theta$ induces a deterministic mapping $S(\sigma) = \{t_1, ..., t_T\}$, where $t_i$ is a positive integer corresponding to the time when the $i$-th drawing of distribution 1 occurs. Note that $t_i \in [T]\cup {*}$, where $t_i = *$ indicates that the $i$-th drawing occurs after time $T$. 
Since the realizations of distribution 2 and the randomness in $f$ are fixed, $t_i$ is a deterministic function of $\{\sigma_1, ..., \sigma_{i-1}\}$. We also have the following property as a consequence.   

\paragraph{Property 1.} If $\tilde{t}_j$ indicate the $j$-th time where distribution 2 is selected, then the history $\Lambda_{\tilde{t}_j}^{(2)}$ is completely determined by the index $j$. 

The rest of the proof is identical to the proof of Theorem~\ref{thm:negative_bias}.
\end{proof}

\begin{proof}[Proof of Theorem~\ref{thm:ll}]
The conditional
likelihood $p_{\Lambda_T}(\Lambda_T \mid s_t, ~t=1, \dots, T)$ is related to the original likelihood $h_{\Lambda_T}(\Lambda_T) = \prod_{k=1}^K \prod_{m=1}^{N_T^{(k)}} h_{\theta^{(k)}}(X_m^{(k)})$ via the {\em selective likelihood
ratio (LR)} . 
\begin{eqnarray}
\label{eq:selective:lr}
LR(\bfU \mid s_t, t=1, \dots, T) \propto \prod_{t=K}^{T-1} \Parg{f(\mathbf{U}_t) = s_{t+1} \mid \mathbf{U}_t},
\end{eqnarray}
where $\bfU = (\bfU_t)_{t=1}^T$. The index starts from $K$ because we
always draw samples from each distribution once in the beginning.
The probability is taken over the extra randomness in the selection function $f$, fixing
the decision statistics $\bfU_t$'s and the sequence of choices $s_t$'s. Moreover, note that conditioning on the sequence of distribution to select $s_t$'s
means we are also fixing $\mathbf{N}_t$'s as they are equivalent.

Using the change of variable formula and the selective likelihood ratio in Eqn.~\ref{eq:selective:lr}, we have 
$$
\begin{aligned}
&p_{\Lambda_T}(\Lambda_T \mid s_t, ~t=1, \dots, T) \\
=& p_{\bfU}(\bfU \mid s_t, ~t=1, \dots, T) \times |\det \mathbf{J_{\Lambda_T \to U}}| \\
=& h_{\bfU}(\bfU) LR(\bfU \mid s_t, ~t=1, \dots, T) \times |\det \mathbf{J_{\Lambda_T \to U}}| \\
=& h_{\Lambda_T}(\Lambda_T) \times |\det \mathbf{J_{U \to \Lambda_T}}| \times LR(\bfU \mid s_t, ~t=1, \dots, T) \times 
|\det \mathbf{J_{\Lambda_T \to U}}| \\
\propto& h_{\Lambda_T}(\Lambda_T)\times \prod_{t=K}^{T-1} \Parg{f(\bfU_t) = s_{t+1} \mid \bfU_t},
\end{aligned}
$$
where $\mathbf{J_{\Lambda_T \to U}}$ is the Jacobian matrix for the map from $\Lambda_T \to \bfU$.
$h_{\Lambda_T}(\Lambda_T)$ is the unconditional likelihood of the data generating distribution.
Note the last equation is due to that there is an invertible (linear) map between $\Lambda_T$ and $\bfU$.
\end{proof}

\section{Additional experiment results for joint bias characterization}
\label{app:joint-bias}

  \begin{table}[!htbp]
\centering
\caption{We run each of the following common bandit algorithms (note "TS" stands for Thompson Sampling) across 10,000 independent trials with 2 distributions, each with $\mathcal{N}(\mu_i, 1)$, where $\mu_1 = 1.0, \mu_2=0.75$. In each column, we record the fraction of trials in which $m$ distributions have negative bias at $T=100$, where $m=0,1,2$. In $\epsilon$-Greedy, $\epsilon=0.1$. In Thompson Sampling (TS), all distributions have prior $\mathcal{N}(0,25)$.}
\label{table:freq-bias-2}
\begin{tabular}{|c|c|c|c|}
\hline
 & $m=0$           & $m=1$           & $m=2$                   \\ \hline
Greedy            &0.11  &0.50  &0.39        \\ \hline
lil' UCB           &0.20  &0.51  &0.29      \\ \hline
$\epsilon$-Greedy &0.20  &0.51  &0.29       \\ \hline
TS  &0.17  &0.51  &0.32      \\ \hline
\end{tabular}
\end{table}
 \begin{table}[!htbp]
\centering
\caption{We run each of the following common bandit algorithms (note "TS" stands for Thompson Sampling) across 10,000 independent trials with 3 distributions, each with $\mathcal{N}(\mu_i, 1)$, where $\mu_1 = 1.0, \mu_2=0.75, \mu_3=0.5$. In each column, we record the fraction of trials in which $m$ distributions have negative bias at $T=100$, where $m=0,1,2, 3$. In $\epsilon$-Greedy, $\epsilon=0.1$. In Thompson Sampling (TS), all distributions have prior $\mathcal{N}(0,25)$.}
\label{table:freq-bias-3}
\begin{tabular}{|c|c|c|c|c|}
\hline
 & $m=0$           & $m=1$           & $m=2$      &$m=3$             \\ \hline
Greedy            & 0.05 & 0.26 & 0.44  &0.25     \\ \hline
lil' UCB            &0.07  &0.32  &0.43 & 0.18      \\ \hline
$\epsilon$-Greedy &0.09  &0.34  &0.40  &0.17       \\ \hline
TS  &0.07  &0.31  &0.43  &0.20      \\ \hline
\end{tabular}
\end{table}
We have included additional experiment results to supplement Table~\ref{table:freq-bias} that confirm the findings in Section~\ref{sec:bias-analysis}. In Table~\ref{table:freq-bias-2} and Table~\ref{table:freq-bias-3}, we run experiments in settings there are two and three distributions respectively (details see captions of the Tables). Each column is the fraction of trials in which $m$ distributions have negative bias, where $m=0,1,2$ in the case of two distributions, and $m=0,1,2,3$ in the case of three distributions. 
\section{Examples of computing the conditional likelihood}
\label{app:example-condi}
Here are some examples of computing the explicit forms of the conditional likelihood. 
We see from Eqn.~\ref{eq:ll} that it suffices to compute the selective likelihood ratios through Eqn.~\ref{eq:selective:lr}
for the different algorithms. The explicit form of the selection probability for Thompson Sampling can be found in Appendix~\ref{app:thompson}.

\begin{enumerate}
\item {\bf lil' UCB + Gumbel} or {\bf Greedy + Gumbel}: per Lemma \ref{lem:gumbel},
$$
\Parg{f(\mathbf{U}_t) = k \mid \mathbf{U}_t} = \frac{\exp\left[U_t^{(k)} / \tau_t\right]}{\sum_{i=1}^K \exp\left[U_t^{(i)}/\tau_t\right]}.
$$ 
\item {\bf $\epsilon$-Greedy}:
$$
\Parg{f(\mathbf{U}_t) = k \mid \mathbf{U}_t} = \frac{\epsilon}{K} + (1-\epsilon) \indic{\argmax_i \mean{X_t^{(i)}} = k}.
$$
{\bf $\epsilon$-Greedy + Gumbel}: the selection function will be
$$
f(\mathbf{U}_t) = \begin{cases}
\argmax_{k} \mean{X_t^{(k)}} + \epsilon_t^{(k)}, & \text{ w.p. } 1-\epsilon \\
k, k\in[K] & \text{ w.p. } \frac{\epsilon}{K}
\end{cases}, \quad
\epsilon_t^{(k)} \overset{\textrm{iid}}{\sim} G_{\tau_t}.
$$
and the selection probabilities are
$$
\Parg{f(\mathbf{U}_t) = k \mid \mathbf{U}_t} = \frac{\epsilon}{K} + (1-\epsilon) \cdot \frac{\exp[\mean{X_t^{(k)}} / \tau_t]}{\sum_{i=1}^K \exp[\mean{X_t^{(i)}}/\tau_t]}.
$$
We see that with Gumbel randomization, the only difference is that we replace argmax with the
softmax function.

\end{enumerate}
\section{Optimization the cMLE with contrastive divergence} \label{app:cmle-cd}
Theorem \ref{thm:ll} gives an explicit formula for likelihood function up to a normalizing
constant (partition function). Since it is infeasible to get an explicit formula for this partition function,
we use Contrastive Divergence (CD) proposed in \citet{contrastive_divergence} 
for solving the Maximum Likelihood Estimation problem. 

To maximize the log-likelihood, 
$$
\max_{\theta} ~\log p(\Lambda_T \mid s_t,~t=1,\dots,T; \theta),
$$
we compute its approximate gradient descent using CD. Suppose 
$$
p(\Lambda_T \mid s_t,~t=1,\dots,T; \theta) = 
\frac{\ell(\Lambda_T \mid s_t,~t=1,\dots,T; \theta)}{Z(\theta)},
$$
then the approximate gradient step for $\theta$ would be
$$
\theta_{i+1} = \theta_i + \eta\left(\frac{\partial \ell}{\partial \theta}\bigg|_{\Lambda_T}
- \frac{\partial \ell}{\partial \theta}\bigg|_{\Lambda_T'}\right),
$$ 
where $\Lambda_T'$ is a single step of MCMC from the density
$p(\Lambda_T \mid s_t,~t=1,\dots,T; \theta_i)$, $\eta$ is the step size.
Contrastive Divergence can be seen as a form of stochastic gradient descent where
the gradient $\frac{\partial \log Z(\theta)}{\partial \theta} = 
\Esubarg{\Lambda_T}{\frac{\partial \ell}{\partial \theta}}$ is approximated by a single
sample from the MCMC chain. In practice, to stabilize the gradient, we may take multiple samples
from the MCMC chain and average the gradient to reduce variance. 

See Algorithm~\ref{alg:CD} for finding the cMLE using Contrastive Divergence.
\begin{algorithm}
  \caption{Algorithm for computing cMLE for adaptive data collection}
  \label{alg:CD}

  \begin{algorithmic}
    \State Initialize $\theta_0 = \left(\mean{X_T^{(1)}}, \dots, \mean{X_T^{(K)}}\right)$ to be the empirical means. 
    \Repeat
    \State Obtain MCMC samples $(\Lambda_T^{'(1)}, \dots, \Lambda_T^{'(R)})$ from the density in Eqn.~\ref{eq:ll} at $\theta_i$, where $R$ is the number of MCMC samples we take.
    \State Update $\theta$ through the gradient step,
    $$
    \theta_{i+1} = \theta_i + \eta\left(\frac{\partial \ell}{\partial \theta}\bigg|_{\Lambda_T}
    - \frac{1}{R}\sum_{r=1}^R\frac{\partial \ell}{\partial \theta}\bigg|_{\Lambda_T'^{(r)}}\right),
    $$ 
    \State $i \mapsto i+1$
    \Until $\theta_i$ converges.
  \end{algorithmic}
\end{algorithm}

\section{Gumbel-Max trick} \label{app:gumbel}
\begin{lemma}[Gumbel-Max trick]
\label{lem:gumbel}
For any fixed vectors $U = (U^{(1)}, \dots, U^{(K)}) \in \mathbb{R}^{K}$, we have
$$
\Psubarg{\epsilon}{\argmax_i U^{(i)} + \epsilon^{(i)} = k} = \frac{\exp(U^{(k)} / \tau)}{\sum_{i=1}^K \exp(U^{(k)} / \tau)},
$$
where $\epsilon^{(k)} \overset{\textrm{iid}}{\sim} G_{\tau}$, where $G_{\tau}$ is Gumbel distribution with scale $\tau$.
\end{lemma}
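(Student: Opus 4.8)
The plan is the standard ``condition on the winner's noise and integrate'' argument. First I would record the two facts about the scale-$\tau$ Gumbel distribution $G_\tau$ (with location $0$): its CDF is $F(x) = \exp\!\big(-e^{-x/\tau}\big)$ and its density is $f(x) = \tfrac{1}{\tau}e^{-x/\tau}\exp\!\big(-e^{-x/\tau}\big)$. The event $\{\argmax_i U^{(i)} + \epsilon^{(i)} = k\}$ is (up to a measure-zero tie set) the event that $U^{(k)} + \epsilon^{(k)} \ge U^{(i)} + \epsilon^{(i)}$ for all $i \ne k$. Conditioning on $\epsilon^{(k)} = z$ and using that the $\epsilon^{(i)}$ are independent, this conditional probability factorizes as $\prod_{i\ne k} F\!\big(U^{(k)} - U^{(i)} + z\big)$.

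Next I would integrate this against the density of $\epsilon^{(k)}$, so that
\[
\Psubarg{\epsilon}{\argmax_i U^{(i)} + \epsilon^{(i)} = k}
= \int_{-\infty}^{\infty} \frac{1}{\tau}e^{-z/\tau}\exp\!\big(-e^{-z/\tau}\big)\prod_{i\ne k}\exp\!\Big(-e^{-(U^{(k)}-U^{(i)}+z)/\tau}\Big)\,dz.
\]
The key algebraic observation is that writing $a_i \equiv \exp(U^{(i)}/\tau)$ turns $e^{-(U^{(k)}-U^{(i)}+z)/\tau}$ into $e^{-z/\tau}\,a_i/a_k$, so that the product of all the double-exponential factors (including the $i=k$ one coming from the density) collapses to the single factor $\exp\!\big(-e^{-z/\tau}\,S/a_k\big)$ with $S \equiv \sum_{i=1}^K a_i$. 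The integrand is then $\tfrac{1}{\tau}e^{-z/\tau}\exp\!\big(-e^{-z/\tau}\,S/a_k\big)$.

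Finally I would substitute $u = e^{-z/\tau}$, so $du = -\tfrac{1}{\tau}e^{-z/\tau}\,dz$ and the integral becomes $\int_0^\infty e^{-uS/a_k}\,du = a_k/S$, which is exactly $\exp(U^{(k)}/\tau)\big/\sum_{i=1}^K \exp(U^{(i)}/\tau)$, as claimed. I do not expect any real obstacle here; the only thing to be careful about is the exponent bookkeeping in the collapse of the product and keeping the orientation of the limits straight in the substitution. One may also remark that the ties (where two coordinates achieve the maximum simultaneously) form a null set since the $\epsilon^{(i)}$ are continuous, so restricting to strict inequalities is harmless.
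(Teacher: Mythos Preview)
Your proposal is correct and is essentially the same argument as the paper's: both condition on the winner's noise $\epsilon^{(k)}$, use independence and the Gumbel CDF to factorize the remaining probability, collapse the product of double exponentials into a single exponential via the multiplicative identity $e^{-(U^{(k)}-U^{(i)}+z)/\tau}=e^{-z/\tau}e^{U^{(i)}/\tau}/e^{U^{(k)}/\tau}$, and then evaluate the resulting one-dimensional integral by the substitution $u=e^{-z/\tau}$. The only cosmetic difference is notation---the paper abbreviates $t(x)=\exp(-x/\tau)$ whereas you abbreviate $a_i=\exp(U^{(i)}/\tau)$---and your remark about ties being a null set is a nice clarification the paper leaves implicit.
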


\begin{proof}
Let $t(x) = \exp(-x/\tau)$, then we have
\begin{align*}
&\Psubarg{\epsilon}{U^{(k)} + \epsilon^{(k)} > U^{(i)} + \epsilon^{(i)}, ~ i \neq k}\\
=&\Psubarg{\epsilon^{(k)}}{\prod_{1\leq i\leq K, i\neq k}e^{-t(U^{(k)}+\epsilon^{(k)}-U^{(i)})}}\\
	=&\int\limits_{\epsilon^{(k)}\in\mathbb{R}} \exp\left(-\sum_{1\leq k\leq K, i\neq k}t(U^{(k)}+\epsilon^{(k)}-U^{(k)})\right)\frac{1}{\tau}t(\epsilon^{(k)})e^{-t(\epsilon^{(k)})}d\epsilon^{(k)} \\
	=&\int\limits_{\epsilon^{(k)}\in\mathbb{R}} \exp\left(-\sum_{i=1}^K t(\epsilon^{(k)}+U^{(k)}-U^{(i)})\right)\frac{1}{\tau}t(\epsilon^{(k)})d\epsilon^{(k)} \\
	=&\int\limits_{\epsilon^{(k)}\in\mathbb{R}} \exp\left(-t(\epsilon^{(k)})\sum_{i=1}^K t(U^{(k)}-U^{(i)})\right)\frac{1}{\tau}t(\epsilon^{(k)})d\epsilon^{(k)} \\
	=&-\int\limits_{-\infty}^0 \exp\left(-s\sum_{i=1}^K t(U^{(k)}-U^{(i)})\right)ds \\
	=&\frac{1}{\sum_{i=1}^K t(U^{(k)}-U^{(i)})} =\frac{e^{U^{(k)}/\tau}}{\sum_{i=1}^K e^{U^{(i)}/\tau}}.
\end{align*}
\end{proof}

\section{Propensity Matching}
\label{app:prop}
Propensity Matching \citep{austin2011introduction} is an unbiased estimator that is commonly used in selection functions that make choices based on the probability of selecting a distribution, such as in EXP3 suggested by \citet{auer2002nonstochastic}. The estimator achieves consistent estimates by 
\begin{eqnarray}
\hat{\mu}^{(k)} = \frac{\sum_{t=1}^T\indic{f(\Lambda_t) = k} \cdot \frac{X_{N_t^{(k)}}^{(k)}}{\Pr[f(\Lambda_t) = k]}}{T}.
\end{eqnarray}
for $k\in [K]$, where $T$ is the horizon.
This estimator also suffers from high variance, as observed in Table~\ref{table:mse-full}. Additionally, this estimator is only relevant to be applied if the selection function $f$ outputs a probability distribution over which one of the $K$ distributions to select at each time step, so it is not readily applicable to Greedy and UCB type algorithms.

\section{Extensions to Thompson Sampling}
\label{app:thompson}
Thompson Sampling is another common bandit algorithm \citep{thompson1933likelihood,agrawal2012analysis}. We extend Proposition~\ref{prop:algs_satifies} to Thompson sampling, and then show how to apply cMLE, and finally show empirical results.
In the Gaussian setting,
with Gaussian prior $\mu^{(k)} \sim \mathcal{N}(\mu_0^{(k)}, \sigma_0^2)$, and $X_t^{(k)} \sim \mathcal{N}(\mu^{(k)}, \sigma^2)$, where $\mu^{(k)}$ is the true mean for distribution $k$, the decision statistics are the posterior means and variances,
$$
\begin{gathered}
U_t^{(k)} = (\mu_t^{(k)}, \sigma^{(k) 2}_t)\\
\mu_t^{(k)} = \displaystyle\frac{\left(\frac{\mu_0}{\sigma_0^2} + \frac{N_t^{(k)}\mean{X_t}^{(k)}}{\sigma^2}\right)}{\frac{1}{\sigma_0^2} + \frac{N_t^{(k)}}{\sigma^2}},
\quad \sigma_t^{(k)2} = \left(\frac{1}{\sigma_0^2} + \frac{N_t^{(k)}}{\sigma^2}\right)^{-1}.
\end{gathered}
$$ 
The selection function is
$$
f(\mathbf{U}_t) = \argmax_{k} \hat{\mu}_t^{(k)}, \quad \hat{\mu}_t^{(k)} \sim N(\mu_t^{(k)}, \sigma_t^{(k)2}).
$$
\subsection{Extension of Proposition~\ref{prop:algs_satifies} to Thompson Sampling}
\begin{lemma}
Let $\bm{\theta}^{(k)}=\{\theta_i^{(k)}\}$ be a set of $M$ parameters that are updated after each drawing of distribution $k$. Let $F_{\bm{\theta}^{(k)}}$ be the CDF of $\theta_i^{(k)}$. Define the generalized inverse CDF $F^{-1}_{\theta_i^{(k)}}(q) = \inf \{\theta \in \R:F(\theta)\geq q\}$. Assume for any $q_1, \cdots, q_M \in [0,1]$,
\begin{eqnarray}
\E\left[X^{(k)}|F^{-1}_{\theta_1^{(k)} \mid \mean{X}_t^{(k)}}(q_1), \cdots, F^{-1}_{\theta_M^{(k)} \mid \mean{X}_t^{(k)}}(q_M)\right] \geq \E\left[X^{(k)}|F^{-1}_{\theta_1^{(k)} \mid \mean{X}_t^{(k)'}}(q_1), 
\cdots, F^{-1}_{\theta_M^{(k)} \mid \mean{X}_t^{(k)'}}(q_M)\right] \label{eq:thom-cdf}
\end{eqnarray}
if 
$\mean{X}_t^{(k)} > \mean{X}_t^{(k)'}$.
Then Thompson sampling is also equivalent to selection function $f(\Lambda_t, \omega =\{q_i\}_{i=1}^M)$ that satisfies \emph{Exploit} and \emph{IIO}.
\begin{proof}
Recall that in Thompson Sampling, we choose the distribution that has the highest expected mean conditioned on a sample drawn from the posterior distribution of $\bm{\theta}^{(k)}$. Since we condition on a fixed realization of random seeds $q_1, \cdots, q_M$ drawn to sample from the inverse CDF of the posterior of $\bm{\theta^{(k)}}$, Equation \eqref{eq:thom-cdf} implies that the expected mean is higher for distributions that have higher empirical mean so far. \emph{Exploit} is satisfied by definition. For \emph{IIO}, since the posterior of $\bm{\theta}^{(k)}$ is a deterministic function of the history $\Lambda^{(k)}$, it is also trivially satisfied.
\end{proof}
\end{lemma}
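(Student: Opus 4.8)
The plan is to reduce the lemma to Proposition~\ref{prop:algs_satifies} by exhibiting, for Thompson Sampling, an explicit randomized selection function of the type treated there, and then checking \emph{Exploit} and \emph{IIO} for it. First I would make the internal randomness of Thompson Sampling explicit as a seed $\omega=\{q_i\}_{i=1}^M$: since the posterior parameters $\bm{\theta}^{(k)}=\{\theta_i^{(k)}\}$ are deterministic functions of the per-arm history $\Lambda_t^{(k)}$ (equivalently of $\mean{X_t^{(k)}}$ and $N_t^{(k)}$), drawing from the posterior is the same as feeding uniform seeds $q_1,\dots,q_M$ through the generalized inverse CDFs $F^{-1}_{\theta_i^{(k)}\mid\mean{X}_t^{(k)}}$, and the expected reward computed from those sampled parameters becomes a deterministic function $U_t^{(k)}=U(\mean{X_t^{(k)}},N_t^{(k)},\omega)$. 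Thompson Sampling is then exactly $f(\bfU_t)=\argmax_k U_t^{(k)}$ with a fixed tie-breaking rule, which puts it in the same form treated in the proof of Proposition~\ref{prop:algs_satifies} (cf.\ Eqn.~\eqref{eq:prop1_proof}).

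Next I would verify \emph{Exploit}. Fix $t$, $k$, the seed $\omega$, and the competitors' histories $\Lambda_t^{(-k)}$, and take two equal-length histories of $k$ with $\mean{X_t^{(k)}}\le\mean{X_t^{'(k)}}$. Replacing $\Lambda_t^{(k)}$ by $\Lambda_t^{'(k)}$ leaves every competing score $U_t^{(j)}$, $j\neq k$, literally unchanged, while hypothesis~\eqref{eq:thom-cdf} gives that the post-randomization score of $k$ weakly increases. Hence if $k$ was the argmax it is still the argmax (the tie-break is unaffected since the other scores do not move), so $f$ still returns $k$ — exactly as in the lil' UCB and Greedy cases of Proposition~\ref{prop:algs_satifies}. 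This step should be routine given the monotonicity hypothesis.

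The step I expect to be the main obstacle is \emph{IIO}. The natural argument is that once $k$ is excluded the chosen arm is $\argmax_{j\neq k}U_t^{(j)}$, whose law depends only on $\Lambda_t^{(-k)}$ and the competitors' seeds and never on the realized values of $\Lambda_t^{(k)}$. The subtlety is that the conditioning event $\{f\neq k\}=\{U_t^{(k)}<V\}$, with $V:=\max_{j\neq k}U_t^{(j)}$, itself involves $U_t^{(k)}$, whose distribution does depend on $\Lambda_t^{(k)}$; writing $G_k$ for the CDF of $U_t^{(k)}$ and using that it is drawn independently of the competitors, the conditional probability equals $\E[\mathbf{1}\{\argmax_{j\neq k}U_t^{(j)}=i\}\,G_k(V)]/\E[G_k(V)]$, and one must argue this ratio is unchanged as $G_k$ (hence $\Lambda_t^{(k)}$) varies. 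I would look for structure forcing that — e.g.\ exchangeability of the competitors' scores, or $G_k$ entering only through a rank statistic — but absent such structure the ratio genuinely moves with $\Lambda_t^{(k)}$ (it already does so in the Gaussian model when the competitors have unequal posterior variances), so a fully rigorous treatment appears to require stating \emph{IIO} for Thompson Sampling under an additional hypothesis that rules this out (for instance, conditioning only on the \emph{order} of the sampled posterior means rather than on their magnitudes). Once \emph{Exploit} and \emph{IIO} are established for the equivalent $f$, Theorem~\ref{thm:negative_bias} applies verbatim and yields the negative bias for Thompson Sampling.
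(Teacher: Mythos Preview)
Your \emph{Exploit} argument coincides with the paper's: fix the seed, apply hypothesis~\eqref{eq:thom-cdf}, and observe that arm $k$'s score weakly increases while the competitors' scores are unchanged. For \emph{IIO} you are in fact more careful than the paper, which disposes of it in one sentence (``the posterior of $\bm{\theta}^{(k)}$ is a deterministic function of $\Lambda^{(k)}$, so IIO is trivially satisfied''). Your objection is correct: the conditioning event $\{f\neq k\}$ reweights the competitors' joint law by $G_k(V)$, and your Gaussian example with unequal posterior variances among the arms $j\neq k$ shows that $\Pr[f=i\mid f\neq k]$ genuinely moves with $\Lambda_t^{(k)}$. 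So Thompson Sampling does not satisfy the distributional \emph{IIO} of Definition~2 in general, and the paper's one-liner glosses over exactly the issue you raise.

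Where your proposal goes astray is in the suggested remedies. Exchangeability of the competitors' scores or rank-only conditioning are strong restrictions that Thompson Sampling does not enjoy either, so imposing them would not recover the lemma as stated. The structure the paper's sentence is really pointing at is the per-arm factorization of the randomness: write $\omega=(\omega^{(1)},\dots,\omega^{(K)})$ with each score $U_t^{(j)}$ a function of $(\Lambda_t^{(j)},\omega^{(j)})$ alone. This yields a \emph{seedwise} form of IIO --- for every fixed $\omega$, whenever both $f(\Lambda_t,\omega)\neq k$ and $f(\Lambda'_t,\omega)\neq k$ the two selections agree, since $\argmax_{j\neq k}U_t^{(j)}$ is literally unchanged --- and it is this, not the distributional IIO of Definition~2, that one should feed into the proof of Theorem~\ref{thm:negative_bias}, by conditioning directly on the competing arms' seeds $\omega^{(-k)}$ rather than constructing $h$ from the conditional law $f\mid f\neq k$. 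That route still needs some care (the seeds are naturally indexed by round, so Property~1 there is not automatic), but it is the right direction; your proposed extra hypotheses are not.
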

\subsection{cMLE for Thompson Sampling}

For {\bf Thompson + Gumbel}, additional Gumbel noises are added to the sampled expected reward $\hat{\mu}_t^{(k)}$'s.
In other words, the selection function will be
$$
f(\mathbf{U}_t) = \argmax_{k} \hat{\mu}_t^{(k)} + \epsilon_t^{(k)}, \quad \hat{\mu}_t^{(k)} \sim N(\mu_t^{(k)}, \sigma_t^{(k)2}), \quad
\epsilon_t^{(k)} \overset{\textrm{iid}}{\sim} G_{\tau_t},
$$
where $G_{\tau_t}$ is a centered Gumbel distribution with mean 0 and scale $\tau_t$.

The selection probability
$$
\Parg{f(\mathbf{U}_t) = k \mid \mathbf{U}_t} =  \prod_{t=K}^{T-1}\prod_{k=1}^K 
\phi\left(\frac{\hat{\mu}_t^{(k)} - \mu_t^{(k)}}{\sigma_t^{(k)}}\right)
\prod_{t=K}^{T-1} \frac{\exp[\hat{\mu}_t^{(k)} / \tau_t]}{\sum_{i=1}^K \exp[\hat{\mu}_t^{(i)}/\tau_t]},
$$
where the softmax terms come from the additional Gumbel randomizations.
\subsection{Experimental results}
We compare the bias and MSE of the original Thompson Sampling (TS) algorithm, and the debiased results after running cMLE. The debiasing runs 3000 gradient descent steps, 30 steps of MCMC with the first half as burn-in. The scale of the Gumbel distribution is 1.0. See Table~\ref{table:thom} for experiment results.
\\
\makeatletter
\setlength{\@fptop}{5pt}
\makeatother
\begin{table}[!htb]
\centering
\caption{In the left table, we compare the bias of the original Thompson Sampling (TS) algorithm and the bias after running cMLE, for $K=2$ and $K=5$ distributions, with different stopping values T. With $K=2$, each distribution is drawn from $\mathcal{N}(\mu_i,1)$. where $\mu_1=1.0, \mu_2=0.75 $. With $K=5$, each distribution is drawn from $\mathcal{N}(\mu_i,1)$. where $\mu_1=1.0, \mu_2=0.75, \mu_3=0.5, \mu_4=0.38, \mu_5=0.25.$ All distributions have prior $\mathcal{N}(0,25)$. The left column is the bias of the original algorithm, and the right column is the percentage of bias that is left after running cMLE. In the right table, we compare the MSE of the original algorithm, data splitting (held-out), and cMLE. The leftmost columns show the MSE in the original algorithm, and the right two columns show the percentage in comparison to the MSE of the original algorithm. We see that data splitting suffers from high variance, and cMLE improves MSE.}
\label{table:thom}
\begin{tabular}{|c|c|c|}
\hline
\multirow{2}{*}{} & \multicolumn{2}{c|}{TS}\\ \cline{2-3}
& orig. & cMLE\\ \hline
T=24,K=2 & -0.19 & 18.7\%\\ \hline
T=32,K=2 & -0.17 & 20.5\%\\ \hline
T=60,K=5 & -0.23 & 37.3\%\\ \hline
T=80,K=5 & -0.11 & 28.8\%\\ \hline
\end{tabular}
\quad
\begin{tabular}{|c|c|c|c|}
\hline
\multirow{2}{*}{} & \multicolumn{3}{c|}{TS}\\ \cline{2-4}
& orig. & held-out  & cMLE\\ \hline
T=24,K=2 & 0.32 & 130.0\% & \textbf{90.0\%}\\ \hline
T=32,K=2 & 0.28 & 110.0\% & \textbf{77.0\%}\\ \hline
T=60,K=5 & 0.34 & 123.0\% & \textbf{85.0\%}\\ \hline
T=80,K=5 & 0.16 & 125.0\% & \textbf{62.0\%}\\ \hline
\end{tabular}
\end{table}

\end{document}